\Crefname{equation}{Eq.\!}{Eqs.\!}
\newtheorem{theorem}{Theorem}[section]
\newtheorem{lemma}{Lemma}
\newtheorem{assumption}{Assumption}
\newtheorem{remark}{Remark}
\DeclareMathOperator*{\argmax}{arg\,max}
\DeclareMathOperator*{\argmin}{arg\,min}
\DeclareMathOperator{\sgn}{sgn}
\newcommand{\AI}{\texttt{AI Developer}}
\newcommand{\DP}{\texttt{Data Provider}}
\newcommand{\DPs}{\texttt{Data Providers}}
\newcommand{\Adv}{\texttt{Adversary}}
\newcommand{\incfig}[2][]{%
  \ifx\\#1\\%
    \def\svgwidth{\columnwidth}
  \else%
    \def\svgwidth{#1}%
  \fi%
  \import{./Figures/}{#2.pdf_tex}%
}
\renewcommand\paragraph{\@startsection{paragraph}{4}{\z@}%
    {0ex \@plus1ex \@minus1ex}%
    {-1em}%
    {\normalfont\normalsize\bfseries}}
\title{Adversarial Attacks on Data Attribution}
\author{%
    Xinhe Wang\\
    University of Michigan\\
    \texttt{sevenmar@umich.edu}\\
    \And
    Pingbang Hu\\
    University of Illinois Urbana-Champaign\\
    \texttt{pbb@illinois.edu}\\
    \And
    Junwei Deng\\
    University of Illinois Urbana-Champaign\\
    \texttt{junweid2@illinois.edu}\\
    \And
    Jiaqi W.\ Ma\\
    University of Illinois Urbana-Champaign\\
    \texttt{jiaqima@illinois.edu}
}
\begin{document}

\maketitle

\begin{abstract}
    Data attribution aims to quantify the contribution of individual training data points to the outputs of an AI model, which has been used to measure the value of training data and compensate data providers. Given the impact on financial decisions and compensation mechanisms, a critical question arises concerning the adversarial robustness of data attribution methods. However, there has been little to no systematic research addressing this issue. In this work, we aim to bridge this gap by detailing a threat model with clear assumptions about the adversary's goal and capabilities and proposing principled adversarial attack methods on data attribution. We present two methods, \emph{Shadow Attack} and \emph{Outlier Attack}, which generate manipulated datasets to inflate the compensation adversarially. The Shadow Attack leverages knowledge about the data distribution in the AI applications, and derives adversarial perturbations through ``shadow training'', a technique commonly used in membership inference attacks. In contrast, the Outlier Attack does not assume any knowledge about the data distribution and relies solely on black-box queries to the target model's predictions. It exploits an inductive bias present in many data attribution methods---outlier data points are more likely to be influential---and employs adversarial examples to generate manipulated datasets. Empirically, in image classification and text generation tasks, the Shadow Attack can inflate the data-attribution-based compensation by at least \(200\%\), while the Outlier Attack achieves compensation inflation ranging from \(185\%\) to as much as \(643\%\). Our implementation is ready at \href{https://github.com/TRAIS-Lab/adversarial-attack-data-attribution}{https://github.com/TRAIS-Lab/adversarial-attack-data-attribution}.
\end{abstract}
\section{Introduction}\label{sec:intro}
\emph{Data attribution} aims to quantify the contribution of individual training data points to the outputs of an Artificial Intelligence (AI) model~\citep{pmlr-v70-koh17a}. A key application of data attribution is to measure the value of training data in AI systems, enabling appropriate compensation for data providers~\citep{ghorbani2019data,jia2019towards}. With the rapid advancement of generative AI, these methods have gained increased relevance, particularly in addressing copyright concerns. Recent studies have explored economic frameworks using data attribution for copyright compensation, showing promising preliminary results~\citep{deng2023computational,wang2024economic}.

Given the significant potential of data attribution methods for data valuation and compensation, an important question arises regarding their adversarial robustness. As these methods influence financial decisions and compensation mechanisms, they may attract malicious actors seeking to manipulate the system for personal gain. This underscores the need to investigate whether data attribution methods can be manipulated and exploited, as their vulnerabilities could lead to unfair compensation, undermining the trustworthiness of the solutions built on top of these methods.

However, adversarial attacks on data attribution methods have received little to no exploration in prior literature. Along with the proposal of a data-attribution-based economic solution for copyright compensation, \citet{deng2023computational} briefly experimented with a few heuristic approaches (e.g., duplicating data samples) for attacking data attribution methods. A systematic study that clearly defines the threat model and develops principled adversarial attack methods has yet to be conducted.

This work presents the first comprehensive study to fill this gap. We first outline the threat model by detailing the data compensation workflow and specifying the assumptions we made. One key assumption is that the data contribution is periodic, and there is certain persistence across consecutive iterations of data contributions, which is the source of knowledge that the adversary could exploit. We also assume that the adversary may either have access to the distribution of the data used by the target model of the AI system or can get black-box queries of the target model's predictions, both are commonly seen in the AI security literature~\citep{shokri2017membership,10.1145/3128572.3140448}.

Subsequently, we propose two adversarial attack strategies, \emph{Shadow Attack} and \emph{Outlier Attack}, respectively relying on different assumptions about the adversary's capabilities. The Shadow Attack relies on the access to data distribution and employs the ``shadow training'' technique commonly used in membership inference attacks~\citep{shokri2017membership} to train ``shadow models'' that imitate the target model. The adversary can then directly perturb their dataset to achieve a higher compensation on these shadow models. The Outlier Attack, instead, does not assume knowledge about the data distribution but only relies on black-box queries of the target model's predictions. The key idea behind this method lies in an inductive bias of many data attribution methods---outlier data points are more likely to be more influential. The proposed Outlier Attack utilizes adversarial examples~\citep{goodfellow2014explaining,10.1145/3128572.3140448} to generate realistic outliers in a black-box fashion. 

We conduct extensive experiments, including both image classification and text generation settings, to demonstrate the effectiveness of the proposed attack methods. Our results show that by only adding imperceptible perturbations to real-world data features, the Shadow Attack can inflate the adversary's compensation to at least \(200\%\) and up to \(456\%\), while the Outlier Attack can inflate the adversary's compensation to at least \(185\%\) and up to \(643\%\).

Overall, our study reveals a critical practical challenge---adversarial vulnerability---in deploying data attribution methods for data valuation and compensation. Moreover, the design of the proposed attack methods, especially the Outlier Attack that exploits a common inductive bias of data attribution methods, offers deeper insights into these vulnerabilities. These findings provide valuable directions for future research to enhance the robustness of data attribution methods.
\section{Related work}\label{sec:related}
\paragraph{Data Attribution for Data Valuation and Compensation.}
Data attribution methods have been widely used for quantifying the value of training data in AI applications and compensating data providers~\citep{ghorbani2019data,jia2019towards,yoon2020data,kwon2021beta,feldman2020neural,xu2021validation,lin2022measuring,kwon2023data,just2023lava,wang2023data,deng2023computational,wang2024economic}. With the rapid advancement of generative AI, these methods have gained increasing relevance due to the growing concerns around copyright. Recent studies have proposed economic solutions using data attribution for copyright compensation, yielding promising preliminary results~\citep{deng2023computational,wang2024economic}. Given the significant potential of data attribution methods for data valuation and compensation, a critical question arises regarding their adversarial robustness. Aside from one earlier exploration using heuristic approaches to attack data attribution methods~\citep{deng2023computational}, no systematic study has addressed this issue. This work presents the first comprehensive study that outlines a detailed threat model and proposes principled and effective approaches for adversarial attacks on data attribution methods.

\paragraph{Membership Inference Attack.}
Membership inference attacks aim to infer whether a specific data point was used during the training of a machine learning model, typically without knowing the actual training dataset or having white-box access to the model~\citep{shokri2017membership}. The general strategy involves leveraging information such as model architecture, training data distribution, or black-box model predictions~\citep{shokri2017membership,song2020systematic}. We refer the readers to \citet{hu2022membership} for a detailed survey on this topic. One of the proposed attack methods, the Shadow Attack, is inspired by the ``shadow training'' technique~\citep{shokri2017membership} commonly used in membership inference attacks, where the adversary draws ``shadow samples'' following the same distribution as the actual training dataset used by the target model to be attacked, and trains ``shadow models'' based on the shadow samples to imitate the target model.

\paragraph{Adversarial Example.}
Adversarial example~\citep{goodfellow2014explaining} is a well-known phenomenon where small, often imperceptible perturbations to input features can significantly alter the predictions of machine learning models, particularly deep neural networks. These perturbations can be generated through black-box queries to the model predictions~\citep{10.1145/3128572.3140448,ilyas2018black,pmlr-v97-guo19a}. For a comprehensive review of adversarial examples, see \citet{yuan2019adversarial,chakraborty2021survey}. In the proposed Outlier Attack, we employ black-box adversarial attack methods for generating adversarial examples to generate realistic outliers relative to the training dataset of the target model, without needing access to the training dataset or the model details.
\section{The threat model}\label{sec:threat}
\subsection{The data compensation scenario}\label{sec:scenario}
We consider a scenario where there is an \AI{}, and a (potentially large) set of \DPs{}. The \DPs{} supply training data for the \AI{} to develop an AI model. In return, the \DPs{} are compensated based on their data's contribution to the model, as measured by a specific data attribution method.

\paragraph{Periodic Data Contribution.}
We assume that the \DPs{} contribute data to the \AI{} \emph{periodically}, a common practice in many AI applications. For example, large language models need periodic updates to stay aligned with the latest factual knowledge about the world~\citep{zhang2023large}; recommender systems must adapt to evolving user preferences~\citep{zhang2020retrain}; quantitative trading firms rely on up-to-date information to power their predictive models\footnote{See, for example, the Bloomberg market data feed: \url{https://www.bloomberg.com/professional/products/data/enterprise-catalog/market/}.}; and generative models for music or art benefit from fresh, innovative works by artists to diversify their creative outputs~\citep{smith2023continual}. However, such periodic data contribution introduces risks: a malicious \DP{} (referred to as an \Adv{} thereafter) could exploit information from previous iterations to adversarially manipulate their future data contribution, potentially inflating their compensation unfairly.

To formalize this scenario, without loss of generality, we consider two consecutive iterations of data contribution, denoted as time steps \(t=0\) and \(t=1\). At \(t=0\), there is no \Adv{} and the training dataset consists solely of contributions from benign \DPs{}. This dataset is represented as \(Z_0\in \mathcal{Z}\), where \(\mathcal{Z}\) is the set of all possible training datasets. At \(t=1\), the training dataset is \(Z_1 = Z_1^b \cup Z_1^a\), where \(Z_1^b \in \mathcal{Z}\) represents the training data provided by benign \DPs{}, while \(Z_1^a\in \mathcal{Z}\) is the set provided by the \Adv{}. We also assume that \(Z_1^b\ \cap Z_1^a = \varnothing\), meaning there is no overlap between the two datasets. Finally, for a dataset \(Z\in \mathcal{Z}\), each data point \(z\in Z\) is represented as a pair \(z=(x, y)\), where \(x\in \mathcal{X}\) is the input feature and \(y\in \mathcal{Y}\) is the prediction target, with \(\mathcal{X}\) and \(\mathcal{Y}\) referring to the feature space and target space, respectively.

\paragraph{AI Training and Data Attribution.}
Let \(\mathcal{M}\) represent the set of AI models, and let \(\mathcal{T} \colon \mathcal{Z} \to \mathcal{M}\) denote a \emph{training algorithm}, mapping a training dataset \(Z\in\mathcal{Z}\) to a model \(\mathcal{T}(Z) \in \mathcal{M}\).

In data attribution, we aim to understand how individual data points from a training dataset \(Z\in\mathcal{Z}\) contribute to the model output on a target (validation) data point from a validation dataset \(V\). Given \(Z\) and \(V\), a \emph{data attribution method} derives a \emph{contribution function} \(\tau\colon Z\times V \to \mathbb{R}\) that assigns a real value \(\tau(z, v)\) to each training data point \(z\in Z\) for a given validation data point \(v\in V\). Denote the set of all such function \(\tau\)'s as \(\mathcal{C}\), and the set of all possible validation sets as \(\mathcal{V}\). Therefore, a data attribution method can be formalized as a function \(\mathcal{A}\colon \mathcal{Z} \times \mathcal{M} \times \mathcal{V} \to \mathcal{C}\). In most cases, we will consider \(\mathcal{A} (Z, \mathcal{T}(Z), V)\), hence \(Z\) and \(V\) alone suffice to specify the resulting \(\tau \).

\paragraph{Compensation Mechanism.}
In practice, the contribution function \(\tau\) derived from data attribution methods may not reliably measure the contributions of all training data points due to the inherent randomness in AI model training and the need for efficiency~\citep{wang2023data,nguyen2024bayesian}. Specifically, measurements of data points with smaller contributions are often less reliable than those of the most influential contributors. Following \citet{deng2023computational}, we consider a compensation mechanism where only the top-\(k\) influential training data points for each validation data point \(v\in V\) receive a fixed amount of compensation.

\subsection{The adversary}\label{sec:adversary}
We now discuss the \Adv{}'s objective and capability under the data compensation scenario.

\paragraph{The Objective of the Adversary.}
Let \(V_1\) be the validation dataset used at step \(t=1\). The objective of the \Adv{} is to construct a dataset \(Z_1^a\) that maximizes the \emph{compensation share} received by the \Adv{}, which is defined as
\begin{equation}\label{eq:share}
    c(Z_1^a)
    = \frac{1}{k|V_1|} \sum_{z\in Z_1^a} \sum_{v\in V_1} \mathbf{1}[\tau_1(z, v) \in \operatorname{Top}_k\left(\{\tau_1(z', v)\mid z'\in Z_1\}\right)],
\end{equation}
where \(\tau_1 = \mathcal{A}(Z_1, \mathcal{T}(Z_1), V_1)\) is the contribution function at \(t=1\); \(\operatorname{Top}_k(\cdot)\) extracts the top-\(k\) elements from a finite set of real numbers; and \(\mathbf{1}[\cdot]\) is the indicator function.

\paragraph{The Capabilities of the Adversary.}
We first outline the limitations imposed on the \Adv{}. In realistic scenarios, the \Adv{} does \textbf{not} have access to any of the following:
\begin{itemize}
    \item the exact training datasets (\(Z_0\) and \(Z_1^b\)) and the exact validation datasets (\(V_0\) and \(V_1\));
    \item white-box access to the trained models \(\mathcal{T}(Z_0)\) and \(\mathcal{T}(Z_1)\);
    \item the contribution functions \(\mathcal{A}(Z_0, \mathcal{T}(Z_0), V_0)\) and \(\mathcal{A}(Z_1, \mathcal{T}(Z_1), V_1)\).
\end{itemize}

Then we make the following assumptions to characterize the capabilities of the \Adv{}.
\begin{assumption}[Persistence]\label{assum:persistence}
    Assume that \(Z_0\subseteq Z_1^b\) and \(|Z_0|/|Z_1^b|\) is close to 1. Additionally, assume that \(V_0\) and \(V_1\) are independently sampled from the same distribution.
\end{assumption}

\begin{assumption}[Access to data distribution and training algorithm]\label{assum:distribution}
    Assume that the \Adv{} has access to the distributions of \(Z_0\) and \(V_0\). Additionally, assume that the \Adv{} has knowledge about the training algorithm \(\mathcal{T}\) (but not the model \(\mathcal{T}(Z_0)\)).
\end{assumption}

\begin{assumption}[Black-box access to model]\label{assum:blackbox-query}
    Assume that the \Adv{} has access to a black-box query to the model \(\mathcal{T}(Z_0)\) to query the model predictions on any input feature \(x\in \mathcal{X}\).
\end{assumption}

Intuitively, \Cref{assum:persistence} assumes persistence between time steps \(t=0,1\), so that information gained from \(t=0\) can inform the attack at \(t=1\). In \Cref{assum:distribution} and \Cref{assum:blackbox-query}, it is worth noting that the \Adv{}'s access to information is limited to what is available at \(t=0\) only.

In practice, \Cref{assum:persistence} is realistic in many real-world applications. For example, in applications that have periodic model updates, such as large language models or recommender systems, a substantial portion of the training data often remains consistent across consecutive iterations, with only incremental changes. Moreover, \Cref{assum:distribution} is a common assumption in the membership inference attack literature~\citep{shokri2017membership}, while \Cref{assum:blackbox-query} reflects a common setup for generating adversarial examples against neural network models~\citep{chakraborty2021survey}.

The two proposed attack methods in \Cref{sec:shadow} and \Cref{sec:outlier} respectively rely on \Cref{assum:distribution} and \Cref{assum:blackbox-query}, but do not depend on both simultaneously. As a result, the first method is a gray-box attack method while the second method is a black-box attack method.

\paragraph{The Action Space of the Adversary.}
We assume that the \Adv{} is restricted to making only small perturbations to an existing set of real data points to construct the adversarial dataset \(Z_1^a\). This implies that the \Adv{} cannot introduce entirely synthetic or arbitrary data, but rather, can modify real data points subtly. This reflects a realistic adversarial scenario as overly large or unnatural alterations would be easily detectable.
\section{Shadow Attack}\label{sec:shadow}
In this section, we introduce the proposed \emph{Shadow Attack}, which leverages \Cref{assum:persistence} and \Cref{assum:distribution}, and exploits the knowledge about the data distribution at \(t=0\) to perform attacks.

At a high level, the \Adv{} first performs a \emph{shadow training} process, where models are trained on data drawn from a distribution similar to that of the training dataset \(Z_0\), allowing the \Adv{} to approximate the target model \(\mathcal{T} (Z_0)\). The \Adv{} then applies adversarial perturbations to the data points they plan to contribute to the \AI{} at \(t=1\). The adversarial perturbations are derived by maximizing the data attribution values on the models obtained through shadow training. See \Cref{fig:shadow-attack} for an illustration.

\begin{figure}[ht]
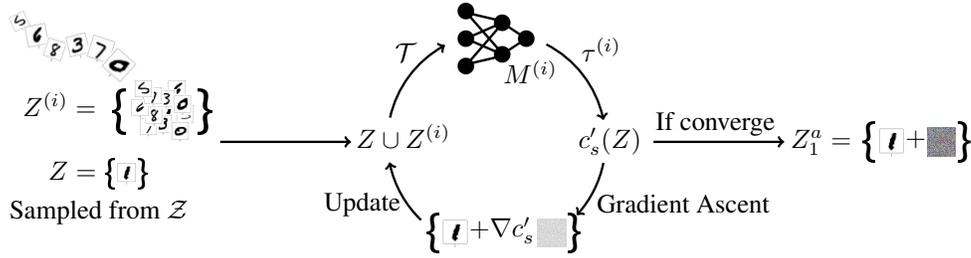

    \centering
    \incfig{shadow_attack}
    \caption{An illustration of the Shadow Attack method. Shadow training datasets \(Z^{(i)}\)'s are sampled to estimate the compensation share of a set of data points \(Z\) if it were contributed to the \AI{}, which can be leveraged to perturb data points in \(Z\) to get a higher compensation share.}
    \label{fig:shadow-attack}
\end{figure}

\subsection{Shadow training}
Given a dataset \(Z\in \mathcal{Z}\) (that the \Adv{} may eventually contribute to the \AI{} as \(Z_1^a\)), the goal of the shadow training process is to estimate the data attribution values of the elements in \(Z\) as if this dataset were contributed to the \AI{} at \(t=1\).

We first sample \(m\) shadow training datasets, denoted as \(Z^{(1)}, Z^{(2)}, \ldots, Z^{(m)}\). These datasets are independent of the actual training dataset used by the \AI{} but follow the same distribution. For each shadow training dataset \(Z^{(i)}, i=1,\ldots, m\), we train a corresponding shadow model \(M^{(i)} \in \mathcal{M}\). The shadow model \(M^{(i)}\) is trained on \(Z^{(i)}\cup Z\) using the same training algorithm \(\mathcal{T}\) as by the \AI{}.\footnote{In our experiment in \Cref{sec:exp-result-shadow}, we demonstrate that the Shadow Attack remains effective when the shadow models have a slightly different architecture compared to the target model.} i.e., \(M^{(i)} = \mathcal{T}(Z^{(i)}\cup Z)\).

In order to estimate the data attribution values, we further sample a shadow validation dataset \(V^{(0)}\) following the same distribution as \(V_0\). We can estimate a shadow contribution function using each shadow training dataset \(Z^{(i)} \cup Z\), the corresponding shadow model \(M^{(i)}\), and the shadow validation dataset \(V^{(0)}\), resulting in \(\tau^{(i)} = \mathcal{A}(Z^{(i)} \cup Z, M^{(i)}, V^{(0)})\).
Similar to \Cref{eq:share}, we consider the following shadow compensation share for the dataset of interest \(Z\):
\begin{equation}\label{eq:shadow-share}
    c_{s}(Z)
    = \frac{1}{mk|V^{(0)}|} \sum_{i=1}^m \sum_{z\in Z} \sum_{v\in V^{(0)}} \mathbf{1}\left[\tau^{(i)}(z, v) \in \operatorname{Top}_k(\{\tau^{(i)}(z', v)\mid z'\in Z^{(i)}\cup Z\})\right].
\end{equation}

\subsection{Adversarial manipulation by maximizing shadow compensation rate}
Given the shadow compensation rate, the natural idea is to apply adversarial perturbations to \(Z\) by solving \(Z_1^a = \argmax\nolimits_{Z' \in \mathcal{N}(Z)} c_s(Z')\), where \(\mathcal{N}(Z)\) specifies the space of datasets with undetectable perturbations. In practice, however, solving this optimization problem has two technical challenges. Firstly, the objective \(c_s(\cdot)\) is discrete and can be difficult to optimize. Secondly, when doing iterative-style optimization, such as gradient ascent, \(\tau^{(i)}\)'s need to be updated and re-evaluated at every step as it depends on \(Z\) and hence \(M^{(i)} = T(Z^{(i)} \cup Z)\). This leads to two problems: On the one hand, updating \(M^{(i)}\)'s requires retraining, which is computationally heavy; on the other hand, many data attribution methods are computationally expensive, even with all the data and (retrained) models available. Hence, maximizing \(c_s(\cdot)\) requires repeatedly retraining and running data attributions on perturbed datasets, which may be infeasible for many cases.

To address the first challenge, we replace \(c_s(\cdot)\) with the following surrogate objective
\begin{equation}\label{eq:surrogate-shadow-share}
    c'_s(Z) = \frac{1}{mk|V^{(0)}|} \sum_{i=1}^m \sum_{z\in Z} \sum_{v\in V^{(0)}} \tau^{(i)}(z, v),
\end{equation}
which aims to directly maximize the contribution values of the data points in \(Z\) on the shadow validation set \(V^{(0)}\). To address the second challenge, we first approximate the retrained models by the initial models trained with the original \(Z\) (before any gradient ascent steps) for computational efficiency, and we further adopt \emph{Grad-Dot}~\citep{charpiat2019input}, one of the most efficient data attribution methods when evaluating the contribution function.\footnote{Empirically, this works well even when the actual data attribution method used by the \AI{} is more advanced ones, as shown in \Cref{sec:exp-result-shadow}.} Using this method, the contribution value \(\tau^{(i)}(z, v)\) for any training data point \(z\) and validation data point \(v\) equals the dot product between the gradients of the loss function on \(z\) and that on \(v\), evaluated with model \(M^{(i)}\).

With such simplification, the adversarial perturbation can be derived by maximizing \Cref{eq:surrogate-shadow-share}, which can be solved efficiently through gradient ascent. Note that since \Cref{eq:surrogate-shadow-share} is a constrained optimization problem, when \(\mathcal{N}(Z)\) is bounded, gradient ascent is guaranteed to converge to some local optimums. In practice, we perform a fixed number of iterations and carefully control the overall perturbation budgets when solving \Cref{eq:surrogate-shadow-share}. The computation cost for each iteration is approximately the same as one forward and one backward pass on each of the \(m\) shadow models.
\section{Outlier Attack}\label{sec:outlier}
For large-scale AI applications such as generative AI services, \Cref{assum:distribution} might be overly strong as it could be difficult for the \Adv{} to guess the distribution of the full training data. However, in this case, \Cref{assum:blackbox-query} often holds. In this section, we further propose \emph{Outlier Attack}, which only relies on \Cref{assum:persistence} and \Cref{assum:blackbox-query}. See \Cref{fig:outlier-attack} for an illustration.

\begin{figure}[ht]
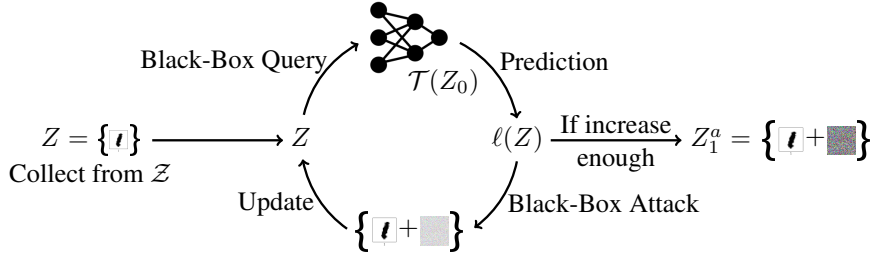

    \centering
    \incfig{outlier_attack}
    \caption{An illustration of the Outlier Attack method. Here, \(\ell(Z)\) denotes the loss used by the model \(\mathcal{T}(Z_0)\) when evaluated on the dataset \(Z\). The data points in \(Z\) are perturbed by maximizing the loss \(\ell(Z)\) through black-box attack methods designed to generate adversarial examples.}
    \label{fig:outlier-attack}
\end{figure}

\subsection{The outlier inductive bias of data attribution}
The core idea behind Outlier Attack leverages an inherent inductive bias present in many data attribution methods: outlier data points in the training dataset tend to be more influential. Indeed, one of the very first applications of the Influence Function developed in the statistic literature was to detect outliers in training data~\citep{cook1977detection}. Consequently, if the \Adv{} contributes a set of outlier data points, they are more likely to get higher compensation, especially given that the compensation mechanism focuses on the top influential data points.

However, translating this intuition into a practical adversarial manipulation strategy poses two significant challenges. Firstly, the contributed data points must closely resemble real-world data; otherwise, they could be easily flagged by data quality filters employed by the \AI{}. Secondly, we aim to develop an attack method that does not rely on direct knowledge about the training dataset or its underlying distribution. This makes it challenging to determine whether certain data points truly qualify as outliers relative to the training dataset gathered by the \AI{}.

\subsection{Generating realistic outliers with adversarial attacks}\label{sec:outlier-method}
To tackle the first challenge, we propose starting with a set of real-world data and transforming them into outliers through small adversarial perturbations. With careful design, this strategy also addresses the second challenge: if after the perturbation, an AI model has low confidence in correctly predicting its target, then this perturbed data point likely behaves as an outlier relative to the model's training dataset. We discuss two key design aspects for achieving these goals:
\begin{enumerate}
    \item \textbf{Data component to be perturbed.} Recall that a data point \(z=(x, y)\) consists of both the input feature \(x\) and the prediction target \(y\). To obtain an outlier data point from \(z\), we perturb the feature \(x\) only, without modifying the target \(y\). Intuitively, flipping the target \(y\) could easily result in an outlier data point through mislabeling, such an outlier typically degrades model performance and is likely to be identified as negatively influential. Moreover, mislabeled data points are easier for the \AI{} to detect. Therefore, it is more effective to perturb only the feature \(x\) while keeping the target \(y\) unchanged.
    \item \textbf{Objective of perturbation.} For each data point \(z=(x, y)\), we aim to decrease the model confidence in predicting the annotated target \(y\) by perturbing \(x\). In most cases, this is equivalent to increasing the loss \(\ell\) between the model prediction and the target \(y\).
\end{enumerate}

Combining these two design choices, the resulting perturbed data points coincide with what is commonly referred to as \emph{adversarial example} in the literature of adversarial attacks against neural network models. Consequently, existing adversarial attack methods for deriving adversarial examples can be leveraged to generate realistic outliers in our problem setup. Notably, this strategy is very general and can be applied to a variety of data modalities and models by leveraging different off-the-shelf black-box adversarial attack methods.

\paragraph{Choices of Adversarial Attack Methods.}
We consider two concrete machine learning settings, image classification, and text generation, and discuss the choices of adversarial attack methods. For smaller-scale image classification settings, we leverage Zeroth Order Optimization (ZOO)~\citep{10.1145/3128572.3140448} based adversarial attacks, which approximates the gradient ascend on the loss function with respect to the data features using black-box queries to the target model \(\mathcal{T} (Z_0)\). For larger-scale image classification settings, we employ a more advanced black-box adversarial attack method, Simba~\citep{pmlr-v97-guo19a}, which is computationally more efficient. At a high level, Simba sequentially perturbs each scalar pixel value in the image by trying to perturb in both directions and accept the perturbation once it increases the loss. For the text generation setting, we utilize TextFooler~\citep{jin2020bertreallyrobuststrong}, a black-box adversarial attack method tailored for text data. In all the methods of choice, the attack only requires black-box queries to get the predictions of the target model.

\subsection{Theoretical understanding}
The following theorem further provides theoretical insights about the effectiveness of the proposed Outlier Attack. The formal statement, notations, and the proof can be found in \Cref{adxsec:thm:neural-network}.

\begin{theorem}[Informal]\label{thm:neural-network}
    Consider a model trained by ERM on a dataset of size \(n\) with a smooth loss \(\ell \) with respect to model parameters \(\theta \). Assume its corresponding influence score \(\tau\), gradient \(\nabla_\theta \ell (\theta , z)\), and Hessian \(\nabla_\theta^2\ell(\theta , z)\) are all bounded, i.e., \(\vert \tau \vert, \Vert \nabla _\theta \ell \Vert_2 , \Vert \nabla_\theta^2\ell \Vert_{\mathrm{op}} = \Theta _n(1) \). Assume the influence score \(\tau \) is based on the \emph{influence function} by \citet{pmlr-v70-koh17a}, which takes the form
    \begin{equation}\label{eq:IF}
        \tau_{\text{IF}}(z_j, z_{\text{test}}; \hat{\theta})
        = - \nabla_\theta \ell (\hat{\theta} , z_{\text{test}})^\top \left[\frac{1}{n}\sum_{i=1}^n \nabla_\theta^2\ell(\hat{\theta}, z_i)\right]^{-1} \nabla_\theta \ell (\hat{\theta} , z_j),
    \end{equation}
    where \(z_{\text{test} }\) is a test data point while \(\{z_i\}_{i=1}^n\) are the training data points and \(\hat{\theta}\) is the model parameters trained on \(\{z_i\}_{i=1}^n\).
    Then, for any \(z_{\text{test} }\), when \(z_j\) in \(\{z_i\}_{i=1}^{n}\) is perturbed to \(z_j^{\prime}\),
    \begin{itemize}
        \item \(\tau_{\text{IF}}(z_i, z_{\text{test}} ; \hat{\theta}^{\prime}) = \tau_{\text{IF}}(z_i, z_{\text{test}} ; \hat{\theta}) + O(1 / n)\) for all \(i \neq j\), and
        \item \(\tau_{\text{IF}}(z_j^{\prime} , z_{\text{test}} ; \hat{\theta}^{\prime}) = \tau_{\text{IF}}(z_j^{\prime} , z_{\text{test}} ; \hat{\theta}) + O(1 / n)\),
    \end{itemize}
    where \(\hat{\theta}^{\prime}\) refers to the model parameters trained on the perturbed dataset \(\{z_i\}_{i\neq j} \cup \{z_j'\}\).
\end{theorem}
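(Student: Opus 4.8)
\section*{Proof proposal}

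The plan is to reduce the entire statement to a single quantitative fact: replacing one training point out of $n$ moves the ERM minimizer by only $O(1/n)$, i.e.\ $\|\hat\theta'-\hat\theta\|_2 = O(1/n)$. Once this is established, every ingredient of the influence-score formula in \Cref{eq:IF}---the two gradient vectors and the inverse Hessian---changes by $O(1/n)$, and since each ingredient is $\Theta_n(1)$ by the boundedness assumptions, a multilinearity (product-rule) argument propagates the $O(1/n)$ bound to $\tau_{\text{IF}}$ itself. I would first prove the parameter-shift lemma, then bound each factor, then combine; both bullets then follow from the same computation.

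For the parameter shift I would use the first-order optimality conditions. Write $L(\theta)=\frac1n\sum_{i=1}^n\ell(\theta,z_i)$ and $L'(\theta)=L(\theta)+\frac1n[\ell(\theta,z_j')-\ell(\theta,z_j)]$, so that $\nabla L(\hat\theta)=0$ and $\nabla L'(\hat\theta')=0$. Evaluating the perturbed objective's gradient at the old minimizer gives $\nabla L'(\hat\theta)=\frac1n[\nabla_\theta\ell(\hat\theta,z_j')-\nabla_\theta\ell(\hat\theta,z_j)]$, whose norm is $O(1/n)$ because both gradients are bounded. Hence $\hat\theta$ is an $O(1/n)$-approximate stationary point of $L'$; invoking strong convexity of the loss (equivalently, the Hessian bounded below, which is implicit in the $\Theta_n(1)$ assumption and is in any case needed for the inverse Hessian in \Cref{eq:IF} to exist), I obtain $\|\hat\theta'-\hat\theta\|_2\le \mu^{-1}\|\nabla L'(\hat\theta)\|_2=O(1/n)$. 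An equivalent route differentiates the optimality condition along the path $\hat\theta_\epsilon$ carrying weight $\epsilon$ on $\ell(\cdot,z_j')-\ell(\cdot,z_j)$, reads off the influence-style derivative $-H^{-1}[\nabla_\theta\ell(\hat\theta,z_j')-\nabla_\theta\ell(\hat\theta,z_j)]$ at $\epsilon=0$, and integrates from $\epsilon=0$ to $\epsilon=1/n$.

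Next I would control each factor. Writing $H$ for the Hessian at $(\hat\theta,\{z_i\})$ and $H'$ for the Hessian at $(\hat\theta',\{z_i\}_{i\neq j}\cup\{z_j'\})$, I split $H'-H$ into a term from the parameter change and a term from the single swapped summand: the first is $O(\|\hat\theta'-\hat\theta\|)=O(1/n)$ (using that $\nabla_\theta^2\ell$ is Lipschitz in $\theta$), and the second is $\frac1n[\nabla_\theta^2\ell(\cdot,z_j')-\nabla_\theta^2\ell(\cdot,z_j)]=O(1/n)$ since the Hessians are bounded; thus $\|H'-H\|_{\mathrm{op}}=O(1/n)$, and because $\|H^{-1}\|$ and $\|{H'}^{-1}\|$ are bounded, the identity ${H'}^{-1}-H^{-1}={H'}^{-1}(H-H')H^{-1}$ gives $\|{H'}^{-1}-H^{-1}\|_{\mathrm{op}}=O(1/n)$. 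Each gradient factor changes by $\nabla_\theta\ell(\hat\theta',\cdot)-\nabla_\theta\ell(\hat\theta,\cdot)=O(\|\hat\theta'-\hat\theta\|)=O(1/n)$ via the bounded Hessian; crucially, for $i\neq j$ the training point is unchanged and in the second bullet the point is $z_j'$ in both scores, so in every case only $\theta$ and the Hessian's dataset move. Finally, since $\tau_{\text{IF}}=-g_{\text{test}}^\top H^{-1} g_z$ is multilinear in the bounded factors $g_{\text{test}}$, $H^{-1}$, $g_z$, telescoping the three single-factor replacements (each an $O(1/n)$ change against bounded companions) yields $|\tau_{\text{IF}}(\cdot;\hat\theta')-\tau_{\text{IF}}(\cdot;\hat\theta)|=O(1/n)$, covering both bullets identically.

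I expect the parameter-shift lemma to be the main obstacle, since it is the only step that genuinely uses the global structure of ERM rather than pointwise smoothness, and it is where positive-definiteness is indispensable---without a lower bound on the Hessian an $O(1/n)$ objective perturbation could move the minimizer by $\omega(1/n)$. A secondary technical point is that bounding the parameter-induced change in the Hessian needs the loss to be smooth enough that $\nabla_\theta^2\ell$ is Lipschitz in $\theta$ (a bounded third derivative), slightly stronger than the literal ``bounded Hessian''; I would fold this into the standing smoothness hypothesis. All constants are uniform in $z_{\text{test}}$ and $z_j'$ thanks to the $\Theta_n(1)$ bounds, so the stated $O(1/n)$ holds uniformly.
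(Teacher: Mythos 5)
Your proposal is correct and follows essentially the same route as the paper's proof: a strong-convexity parameter-shift lemma giving \(\lVert \hat{\theta}' - \hat{\theta} \rVert_2 = O(1/n)\), an \(O(1/n)\) perturbation bound on the inverse Hessian, Lipschitz-gradient control of the gradient factors, and a multilinear telescoping over the three bounded factors of \(\tau_{\text{IF}}\). The only local differences are that the paper proves the parameter shift via objective values (strong-convexity growth plus \(L\)-Lipschitz loss, yielding \(\lVert \hat{\theta} - \hat{\theta}' \rVert_2 \le 4L/(mn)\)) where you use approximate stationarity of \(\hat{\theta}\) for the perturbed objective, the paper expands the inverse Hessian with a Neumann series where you use the exact resolvent identity \({H'}^{-1} - H^{-1} = {H'}^{-1}(H - H')H^{-1}\), and your explicit accounting of the swapped summand \(\tfrac{1}{n}\bigl[\nabla_\theta^2 \ell(\cdot, z_j') - \nabla_\theta^2 \ell(\cdot, z_j)\bigr]\) in \(H' - H\) is a term the paper's Hessian lemma silently drops (it tracks only the \(\theta\)-Lipschitz part), while, like the paper, you correctly flag that a Lipschitz-Hessian (bounded third derivative) assumption is needed beyond the literal boundedness hypotheses.
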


Intuitively, perturbing \(z_j\) through adversarial attack will increase the magnitude of the gradient \(\nabla_\theta\ell(\hat{\theta}, z_j)\), which tends to also increase the influence score on the original model \(\hat{\theta}\), i.e., \(\tau_{\text{IF}} (z_j^{\prime} , z_{\text{test}}; \hat{\theta}) \gg \tau_{\text{IF}} (z_j , z_{\text{test}} ; \hat{\theta})\). However, in order to get a higher compensation share, the perturbed data point \(z_j^{\prime}\) needs to have a high influence score on the model \(\hat{\theta}^{\prime}\) trained on the perturbed dataset, i.e., \(\tau_{\text{IF}} (z_j^{\prime} , z_{\text{test}} ; \hat{\theta}^{\prime})\), which is not guaranteed by the adversarial attack without characterizing the new model \(\hat{\theta}^{\prime}\). \Cref{thm:neural-network} exactly does this and asserts that \(\tau_{\text{IF}} (z_j^{\prime} , z_{\text{test}} ; \hat{\theta}^{\prime})\) will be close to \(\tau_{\text{IF}} (z_j^{\prime} , z_{\text{test}} ; \hat{\theta})\) when the datasets used to train \(\hat{\theta}\) and \(\hat{\theta}^{\prime}\) are close. It further guarantees that the influence scores of the rest unchanged data points will also remain similar. This explains why the effect of adversarial attacks on the original model can successfully translate to the new model. The results in \Cref{thm:neural-network} can be generalized to the case where more than one data point is perturbed, and a small set of clean data is also added to train the new model. The bound will change from \(O(1/n)\) to \(O(k/n)\), where \(k\) is the total number of changed data points in the dataset.
\section{Experiments}\label{sec:exp}
In this section, we evaluate the effectiveness of the proposed attack methods by the increase in compensation share after manipulating the data with the proposed attack methods.

\subsection{Experimental setup}\label{sec:exp-setup}
\paragraph{Tasks, Datasets, Target Models, and Data Attribution Methods.}
We consider two machine-learning tasks: image classification and text generation. For image classification, we experiment on MNIST~\citep{lecun1998mnist}, Digits~\citep{jiang2023opendataval}, and CIFAR-10~\citep{krizhevsky2009learning} datasets, with different target models including Logistic Regression (LR), Multi-layer Perceptron (MLP), Convolutional Neural Networks (CNN), and ResNet-18~\citep{he2016deep}. We also employ three popular data attribution methods, including Influence Function~\citep{pmlr-v70-koh17a}, TRAK~\citep{park2023trak}, and Data Shapley~\citep{ghorbani2019data}. For text generation, we conduct experiments on NanoGPT~\citep{Karpathy2022} trained on the Shakespeare dataset~\citep{tinyshakespeare}, with TRAK as the data attribution method. Finally, for the image classification settings, we evaluate both Shadow Attack and Outlier Attack, while for the text generation setting, we evaluate Outlier Attack only, as \Cref{assum:distribution} usually does not hold for generative AI settings.  For the data attribution algorithms, we adopt the implementation from the dattri library~\citep{deng2024texttt}. The experimental settings are summarized in \Cref{tab:settings_summary}.

\begin{table}[ht]
	\centering
	\caption{Summary of the experimental settings.}
	\label{tab:settings_summary}
	\begin{tabular}{ccccc}
		\toprule
		\textbf{Setting} & \textbf{Task}        & \textbf{Dataset} & \textbf{Target Model} & \textbf{Attribution Method} \\
		\midrule
		(a)              & Image Classification & MNIST            & LR                    & Influence Function          \\
		(b)              & Image Classification & Digits           & MLP                   & Data Shapley                \\
		(c)              & Image Classification & MNIST            & CNN                   & TRAK                        \\
		(d)              & Image Classification & CIFAR-10         & ResNet-18             & TRAK                        \\
		(e)              & Text Generation      & Shakespeare      & NanoGPT               & TRAK                        \\

		\bottomrule
	\end{tabular}
\end{table}

\paragraph{Data Contribution Workflow.}
For the image classification settings (a), (c), and (d), we set \(|Z_{0}| = 10000, |Z_{1}^{a}| = 100\), and \(|Z_{1}| = 11000\). This setup simulates the following data contribution workflow: At \(t = 0\) when there is no \Adv{}, and \(10000\) training points are used to train the model \(\mathcal{T} (Z_0)\). At \(t = 1\), the \Adv{} contributes \(100\) perturbed training points and other \DPs{} contribute \(900\) new training data points. Together with the previous \(10000\) training points, a total of \(11000\) training points are used to train the model \(\mathcal{T} (Z_1)\). For image classification setting (b), due to the size of the dataset, we set \(|Z_{0}| = 1100\), \(|Z_{1}^a| = 30\) and \(|Z_{1}| = 1100 \) for Outlier Attack, \(|Z_{0}| = 800\), \(|Z_{1}|^a = 30\) and \(|Z_{1}| = 850 \) for Shadow Attack.  The text generation setting follows a similar workflow with \(|Z_{0}| = 4706\), \(|Z_{1}^{a}| = 20\), and \(|Z_{1}| = 6274\).

\paragraph{Evaluation Metrics.}
We evaluate the proposed attack methods using two metrics, where we set \(k=100\) when counting the top-\(k\) influential data points in both cases. The first is the \textbf{Compensation Share} (\Cref{eq:share}) where for each attack method, we calculate \(c(Z_1^a)\) respectively when the original dataset without perturbation is contributed as \(Z_1^a\) (\textbf{Original}) and when the manipulated dataset after perturbation is contributed as \(Z_1^a\) (\textbf{Manipulated}). The increase of \(c(Z_1^a)\) after the perturbation measured by the \textbf{Ratio} reflects the effectiveness of the attack.

To gain a more refined understanding of how the adversarial perturbations affect the top-\(k\) influential data points for individual validation data points in \(V_1\), we consider the second evaluation metric named \textbf{Fraction of Change}. For each validation data point \(v\in V_1\), it measures how many data points in \(Z_1^a\) appear in the top-\(k\) influential data points for \(v\). In comparison to when the original dataset without perturbation is contributed as \(Z_1^a\), we calculate the fraction of validation data points in \(V_1\) that contain more data points from \(Z_1^a\) in the top-\(k\) influential data points when the manipulated dataset after perturbation is contributed as \(Z_1^a\). Similarly, we calculate the fraction of validation points that contain the same number of or fewer data points from \(Z_1^a\) after perturbation. We report the three fractions under the categories \textbf{More}, \textbf{Tied}, and \textbf{Fewer}, where the higher fraction for the \textbf{More} category indicates that the attack method influences the validation data points more broadly.

\subsection{Experimental results: Shadow Attack}\label{sec:exp-result-shadow}
The results of the proposed Shadow Attack method are shown in \Cref{tab:result_shadow}. We conduct experiments on all three image classification settings outlined in \Cref{tab:settings_summary}. For each of the settings, we first consider the case that the shadow models have the same architecture as the target model, as shown in the first three rows of \Cref{tab:result_shadow}. Additionally, for setting (d) where the target model is ResNet-18 (forth row), we further consider using ResNet-9 as the shadow model (last row), which simulates scenarios where the \Adv{}'s knowledge about the training algorithm is limited.

\begin{table}[ht]
	\centering
	\caption{Results of the Shadow Attack method. The target models used for evaluation in each setting are listed in \Cref{tab:settings_summary} while the shadow models used in the attack are listed under \textbf{Shadow Model}. The proportion \(|Z_1^{a}|/|Z_1|\) of the data contributed by the \Adv{} relative to the full dataset at \(t=1\) is also reported. A higher \textbf{Ratio} indicates a more effective attack, and a higher fraction under \textbf{More} means that the attack influences the validation data points more broadly.}
	\label{tab:result_shadow}
	\resizebox{\textwidth}{!}{%
		\begin{tabular}{ccccccccc}
			\toprule
			\multirow{2}{*}{\textbf{Setting}} & \multirow{2}{*}{\textbf{Shadow Model}} & \multirow{2}{*}{\(|Z_1^{a}|/|Z_1|\)} & \multicolumn{3}{c}{\textbf{Compensation Share}} & \multicolumn{3}{c}{\textbf{Fraction of Change}}                                                                   \\
			\cmidrule(lr){4-6} \cmidrule(lr){7-9}
			                                  &                                        &                                      & \textbf{Original}                               & \textbf{Manipulated}                            & \textbf{Ratio} & \textbf{More} & \textbf{Tied} & \textbf{Fewer} \\
			\midrule
			(a)                               & LR                                     & \(0.0098\)                           & \(0.0098\)                                      & \(0.0477\)                                      & \(456.1\%\)    & \(0.955\)     & \(0.038\)     & \(0.007\)      \\
			(b)                               & MLP                                    & \(0.0352\)                           & \(0.0152\)                                      & \(0.0435\)                                      & \(286.2\%\)    & \(0.533\)     & \(0.333\)     & \(0.134\)      \\
			(c)                               & CNN                                    & \(0.0098\)                           & \(0.0112\)                                      & \(0.0467\)                                      & \(417.0\%\)    & \(0.781\)     & \(0.195\)     & \(0.024\)      \\
			(d)                               & ResNet-18                              & \(0.0098\)                           & \(0.0095\)                                      & \(0.0213\)                                      & \(217.3\%\)    & \(0.655\)     & \(0.259\)     & \(0.086\)      \\
			(d)                               & ResNet-9                               & \(0.0098\)                           & \(0.0095\)                                      & \(0.0196\)                                      & \(206.3\%\)    & \(0.622\)     & \(0.310\)     & \(0.068\)      \\

			\bottomrule
		\end{tabular}
	}
\end{table}

The Shadow Attack method is highly effective in increasing the \textbf{Compensation Share} across all the settings. The \textbf{Ratio} of the \textbf{Manipulated} to the \textbf{Original} ranges from \(206.3\%\) to \(456.1\%,\) representing a substantial increase. Notably, the last row corresponds to the setup where the shadow models have the ResNet-9 architecture while the target model is a ResNet-18, and the Shadow Attack remains significantly effective (\(206.3\%,\)) although being slightly worse than the case where shadow model and target model shares the same architecture (forth row, \(217.3\%.\)).

The Shadow Attack is also uniformly effective across a wide range of validation data points, as measured by the metrics of \textbf{Fraction of Change}. The attack is able to increase the number of top-\(k\) influential points from \(Z_1^a\) for more than \(60\%\) of the validation points. In contrast, only less than \(9\%\) of the validation data points have fewer top-\(k\) influential points from \(Z_1^a\) after the attack.

\subsection{Experimental results: Outlier Attack}\label{sec:exp-result-outlier}
The results of the proposed Outlier Attack method are presented in \Cref{tab:result_outlier}, where we include all four settings summarized in \Cref{tab:settings_summary}. The black-box attack method for generating the adversarial examples is chosen according to the discussion in \Cref{sec:outlier-method}.

\begin{table}[ht]
	\centering
	\caption{Results of the Outlier Attack method. The black-box adversarial attack methods for generating adversarial examples are listed under \textbf{Attack Method}. See \Cref{tab:result_shadow} for more context.}
	\label{tab:result_outlier}
	\resizebox{\textwidth}{!}{%
		\begin{tabular}{ccccccccc}
			\toprule
			\multirow{2}{*}{\textbf{Setting}} & \multirow{2}{*}{\textbf{Attack Method}} & \multirow{2}{*}{\(|Z_1^{a}|/|Z_1|\)} & \multicolumn{3}{c}{\textbf{Compensation Share}} & \multicolumn{3}{c}{\textbf{Fraction of Change}}                                                                   \\
			\cmidrule(lr){4-6} \cmidrule(lr){7-9}
			                                  &                                         &                                      & \textbf{Original}                               & \textbf{Manipulated}                            & \textbf{Ratio} & \textbf{More} & \textbf{Tied} & \textbf{Fewer} \\
			\midrule
			(a)                               & ZOO                                     & \(0.0098\)                           & \(0.0098\)                                      & \(0.0631\)                                      & \(643.9\%\)    & \(0.980\)     & \(0.017\)     & \(0.003\)      \\
			(b)                               & Simba                                   & \(0.0250\)                           & \(0.0112\)                                      & \(0.0218\)                                      & \(194.6\%\)    & \(0.440\)     & \(0.380\)     & \(0.180\)      \\
			(c)                               & Simba                                   & \(0.0098\)                           & \(0.0112\)                                      & \(0.0668\)                                      & \(596.4\%\)    & \(0.799\)     & \(0.173\)     & \(0.028\)      \\
			(d)                               & Simba                                   & \(0.0098\)                           & \(0.0095\)                                      & \(0.0176\)                                      & \(185.2\%\)    & \(0.562\)     & \(0.354\)     & \(0.084\)      \\
			(e)                               & TextFooler                              & \(0.0013\)                           & \(0.0035\)                                      & \(0.0092\)                                      & \(262.9\%\)    & \(0.392\)     & \(0.461\)     & \(0.147\)      \\

			\bottomrule
		\end{tabular}
	}
\end{table}

On experimental settings (a) and (c), the Outlier Attack performs even better than the Shadow Attack in terms of both \textbf{Compensation Share} and \textbf{Fraction of Change}, achieving a higher \textbf{Ratio} and a larger fraction under \textbf{More}. Across all four settings, the \textbf{Ratio} ranges from \(185.2\%\) to \(643.9\%,\) demonstrating the exceptional effectiveness of the Outlier Attack. Finally, the results of text generation setting (e) further highlight the applicability of the proposed method to generative AI models.

\subsection{Baseline reference}\label{sec:exp-result-ablation}
To better understand the significance of the results in \Cref{tab:result_shadow} and \Cref{tab:result_outlier}, we compare them with a baseline random perturbation method in \Cref{tab:result_ablation} on the three image classification settings such that the baseline method applies pixel-wise random perturbation to the images, with the perturbation budget matched to that of the proposed attack methods. As shown in the results, the \textbf{Compensation Share} of the dataset with \textbf{Random Perturbation} is nearly identical to that of the \textbf{Original}. The fraction under \textbf{More} is also close to that under \textbf{Fewer}. These results highlight the effectiveness of the proposed attack methods comes from the careful design of adversarial perturbation.

\begin{table}[ht]
	\centering
	\caption{Results of the random perturbation baseline. See \Cref{tab:result_shadow} for more context.}
	\label{tab:result_ablation}
	\begin{tabular}{cccccc}
		\toprule
		\multirow{2}{*}{\textbf{Setting}} & \multicolumn{2}{c}{\textbf{Compensation Share }} & \multicolumn{3}{c}{\textbf{Fraction of Change}}                                                  \\
		\cmidrule(lr){2-3} \cmidrule(lr){4-6}
		                                  & \textbf{Original}                                & \textbf{Random Perturbation}                    & \textbf{More} & \textbf{Tied} & \textbf{Fewer} \\
		\midrule
		(a)                               & \(0.0098\)                                       & \(0.0097\)                                      & \(0.203\)     & \(0.586\)     & \(0.211\)      \\
		(c)                               & \(0.0112\)                                       & \(0.0117\)                                      & \(0.385\)     & \(0.326\)     & \(0.289\)      \\
		(d)                               & \(0.0095\)                                       & \(0.0125\)                                      & \(0.367\)     & \(0.430\)     & \(0.203\)      \\
		\bottomrule
	\end{tabular}
\end{table}

\section{Conclusion}\label{sec:conclusion}
This work addresses a significant gap in the current understanding of the adversarial robustness of data attribution methods, which are increasingly influential in data valuation and compensation applications. By introducing a well-defined threat model, we have proposed two novel adversarial attack strategies, Shadow Attack and Outlier Attack, which are designed to manipulate data attribution and inflate compensation. The Shadow Attack utilizes knowledge of the underlying data distribution, while the Outlier Attack operates without such knowledge, relying on black-box queries only. Empirical results from image classification and text generation tasks demonstrate the effectiveness of these attacks, with compensation inflation ranging from \(185\%\) to \(643\%\). These findings underscore the need for more robust data attribution methods to guard against adversarial exploitation.

\bibliography{reference}
\bibliographystyle{iclr2025_conference}

\clearpage
\appendix
\section{Formal statement of \texorpdfstring{\Cref{thm:neural-network}}{Theorem 1} and its proof}\label{adxsec:thm:neural-network}
\subsection{Setup and the statement}\label{adxsubsec:setup-and-the-statement}
Let the original training set be \( Z =  \{ z_i \}_{i=1}^{n} \). Given a data point \(z\), assume the loss \(\ell (\theta, z)\) is twice differentiable. Then, the empirical loss and the Hessian is given by
\[
    \ell(\theta)
    = \frac{1}{n} \sum_{i = 1}^{n} \ell (\theta, z_i), \quad
    H(\theta)
    = \frac{1}{n} \sum_{i = 1}^{n} h(\theta, z_i),
\]
where we define \(h(\theta,z) = \nabla _\theta ^2 \ell (\theta , z)\). Moreover, we consider the ERM to be
\[
    \hat{\theta}
    = \argmin_{\theta} \frac{1}{n} \sum_{i = 1}^{n} \ell (\theta ,z_i)
\]
Consider the training data attribution to be the influence estimated by the \emph{influence function}~\citep{pmlr-v70-koh17a} \(\tau_{\text{IF}}\). In particular, for a train test pair \((z_{i}, z_{\text{test}})\), \Cref{eq:IF} can be succinctly written as
\[
    \tau_{\text{IF}} (z_{i}, z_{\text{test}}; \hat{\theta})
    = - \nabla _\theta \ell (\hat{\theta},z_{i})^\top H^{-1} (\hat{\theta} )\nabla _\theta \ell (\hat{\theta}, z_{\text{test}}).
\]

Consider perturbing the \(j^{\text{th} }\) training point such that \(z_j\) becomes \(z_j^{\prime} \). Then, the training set turns to \( \{ z_1, \dots , z_j^{\prime}, \dots , z_n \} \), which in turn changes the empirical loss and its hessian to
\[
    \ell ^{\prime} (\theta)
    = \frac{1}{n} \sum_{i \neq j} \ell (\theta, z_i) + \frac{1}{n} \ell (\theta,z_j^{\prime} ) ,\quad
    H^{\prime} (\theta)
    = \frac{1}{n} \sum_{i \neq j} h(\theta, z_i) + \frac{1}{n} h(\theta, z_j^{\prime} ).
\]
It also follows that the minimizer for this perturbed training set is given by
\[
    \hat{\theta}^{\prime}
    = \argmin_{\theta} \frac{1}{n} \sum_{i \neq j} \ell (\theta, z_i) + \frac{1}{n} \ell (\theta, z_j^{\prime} ),
\]
with the corresponding influence score being
\begin{itemize}
    \item \(\tau_{\text{IF}} (z_{i}, z_{\text{test}}; \hat{\theta}') = - \nabla _\theta \ell (\hat{\theta} ^{\prime} , z_{\text{test} })^{\top} (H^{\prime} (\hat{\theta} ^{\prime}))^{-1} \nabla _\theta \ell (\hat{\theta} ^{\prime} , z_i)\) for all \(i \neq j\); and
    \item \(\tau_{\text{IF}} (z_{j}^{\prime} , z_{\text{test}}; \hat{\theta}') = - \nabla _\theta \ell (\hat{\theta} ^{\prime} , z_{\text{test} })^{\top} (H^{\prime} (\hat{\theta} ^{\prime}))^{-1} \nabla _\theta \ell (\hat{\theta} ^{\prime} , z_j^{\prime} )\).
\end{itemize}

\begin{remark}
    From our definition of data attribution methods, \(\tau\) should take the form of \(Z \times V \to \mathbb{R}\) for some training dataset \(Z\) and validation dataset \(V\). In the above notation, we are essentially considering two different training attribution functions \(\tau_{\text{IF}}\) and \(\tau_{\text{IF}}'\) where \(\tau_{\text{IF}} = \mathcal{A}(Z, \mathcal{T}(Z), V)\) and \(\tau_{\text{IF}}' = \mathcal{A}(Z', \mathcal{T}(Z'), V)\) where \(Z'\) is the perturbed dataset, \(\hat{\theta} = \mathcal{T}(Z)\) and \(\hat{\theta}' = \mathcal{T}(Z')\). For clarity, we explicitly write \(\tau_{\text{IF}} (\cdot, \cdot)= \tau_{\text{IF}}(\cdot, \cdot; \hat{\theta})\) and \(\tau_{\text{IF}}' (\cdot, \cdot)= \tau_{\text{IF}}(\cdot, \cdot; \hat{\theta}')\) to avoid confusion.
\end{remark}

Under this setup, we can now state the theorem formally.

\begin{theorem}\label{thm:neural-network-formal}
    Under the above setup, consider a model trained by ERM with a loss \(\ell\) that is twice-differentiable, \(m\)-strongly convex, and \(L\)-Lipschitz continuous with respect to \(\theta \). Assume its corresponding influence score \(\tau_{\text{IF}} \), gradient \(\nabla _\theta \ell (\theta , z)\), and \(h(\theta , z)\) are all bounded, i.e., \(\vert \tau_{\text{IF}} \vert = \Theta(1)\), \(\Vert \nabla _\theta \ell \Vert_2 = \Theta(1)\), \(\Vert h\Vert_{\mathrm{op}} = \Theta(1) \). Then, given any test data point \(z_{\text{test} }\), we have
    \begin{itemize}
        \item \(\tau_{\text{IF}} (z_i, z_{\text{test} }; \hat{\theta}') = \tau_{\text{IF}} (z_i, z_{\text{test} }; \hat{\theta}) + O(1 / n)\) for all \(i \neq j\), and
        \item \(\tau_{\text{IF}} (z_j^{\prime} , z_{\text{test} }; \hat{\theta}') = \tau_{\text{IF}} (z_j^{\prime} , z_{\text{test} }; \hat{\theta}) + O(1 / n)\).
    \end{itemize}
\end{theorem}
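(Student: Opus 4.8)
The plan is to prove the result in two stages: first a \emph{parameter-stability} estimate showing $\Vert \hat{\theta}' - \hat{\theta}\Vert = O(1/n)$, and then a \emph{sensitivity analysis} that propagates this estimate through the three factors composing the influence function in \Cref{eq:IF}. The guiding observation is that $\ell'$ and $\ell$ are empirical averages that differ in only one of their $n$ summands, so every quantity attached to the optimizer should move by at most $O(1/n)$.

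For the first stage I would exploit strong convexity. Since each per-sample loss is $m$-strongly convex, both $\ell$ and $\ell'$ are $m$-strongly convex, and the first-order optimality conditions give $\nabla_\theta \ell(\hat{\theta}) = 0$ and $\nabla_\theta \ell'(\hat{\theta}') = 0$. Evaluating the perturbed gradient at the \emph{original} optimum yields $\nabla_\theta \ell'(\hat{\theta}) = \frac{1}{n}\bigl(\nabla_\theta \ell(\hat{\theta}, z_j') - \nabla_\theta \ell(\hat{\theta}, z_j)\bigr)$, whose norm is $O(1/n)$ by the gradient bound. Strong convexity of $\ell'$ then converts this small gradient at $\hat{\theta}$ into the parameter bound, via the monotonicity inequality $\langle \nabla_\theta \ell'(\hat{\theta}), \hat{\theta} - \hat{\theta}'\rangle \ge m \Vert \hat{\theta} - \hat{\theta}'\Vert^2$ combined with Cauchy--Schwarz, giving $\Vert \hat{\theta}' - \hat{\theta}\Vert \le \frac{1}{m}\Vert \nabla_\theta \ell'(\hat{\theta})\Vert = O(1/n)$.

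For the second stage I would decompose $\tau_{\text{IF}}(\cdot\,; \hat{\theta}') - \tau_{\text{IF}}(\cdot\,; \hat{\theta})$ into a sum of three terms, each isolating the change in one of the bounded factors: the train-point gradient, the inverse Hessian, and the test-point gradient. The two gradient factors change by $O(1/n)$ because the Hessian bound makes $\nabla_\theta \ell(\cdot, z)$ Lipschitz in $\theta$ and $\Vert \hat{\theta}' - \hat{\theta}\Vert = O(1/n)$; note that the train point ($z_i$ or $z_j'$) and the test point are held fixed within each comparison, so only the evaluation point of the gradient moves. For the inverse Hessian I would use the resolvent identity $A^{-1} - B^{-1} = A^{-1}(B - A)B^{-1}$, where strong convexity gives $H, H' \succeq mI$ so both inverses are bounded in operator norm by $1/m$, reducing the task to bounding $\Vert H'(\hat{\theta}') - H(\hat{\theta})\Vert_{\mathrm{op}}$. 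Since each of the three factors is $\Theta(1)$ and each perturbation is $O(1/n)$, the product-rule bookkeeping for the trilinear form yields the claimed $O(1/n)$ estimate.

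The main obstacle I anticipate is controlling $\Vert H'(\hat{\theta}') - H(\hat{\theta})\Vert_{\mathrm{op}}$. Splitting it as $\Vert H'(\hat{\theta}') - H(\hat{\theta}')\Vert_{\mathrm{op}} + \Vert H(\hat{\theta}') - H(\hat{\theta})\Vert_{\mathrm{op}}$, the first piece is cleanly $O(1/n)$ because $H'$ and $H$ differ in only one of $n$ terms, each bounded by the Hessian hypothesis. The second piece, however, measures how the Hessian moves as the parameter moves by $O(1/n)$, and therefore requires Lipschitz continuity of the map $\theta \mapsto H(\theta)$ --- an effective third-order smoothness condition that is implicit in the ``smooth loss'' phrasing of the informal statement rather than in the bare twice-differentiability hypothesis. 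Once this Lipschitz-Hessian control is assumed, everything else reduces to routine operator-norm and Cauchy--Schwarz estimates, and the generalization to perturbing $k$ points (with the bound degrading to $O(k/n)$) follows by applying the single-point argument coordinatewise and summing.
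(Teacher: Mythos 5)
Your proposal is correct and reuses the paper's two-stage skeleton---first a parameter-stability bound \(\lVert\hat{\theta}-\hat{\theta}'\rVert_2=O(1/n)\), then propagation through the trilinear form---but the execution differs at both key steps, and in each case your version is at least as clean. For stability, the paper's \Cref{lma:neural-network-theta} compares \emph{function values}: quadratic growth from \(m\)-strong convexity against the minimizing property of \(\hat{\theta}'\) and \(L\)-Lipschitzness of \(\ell\), yielding \(4L/(mn)\); you instead use first-order optimality and strong monotonicity, observing \(\nabla_\theta\ell'(\hat{\theta})=\frac{1}{n}\bigl(\nabla_\theta\ell(\hat{\theta},z_j')-\nabla_\theta\ell(\hat{\theta},z_j)\bigr)\) so that \(\lVert\hat{\theta}-\hat{\theta}'\rVert_2\le\frac{1}{m}\lVert\nabla_\theta\ell'(\hat{\theta})\rVert_2\le 2L/(mn)\)---an equally valid, slightly sharper argument. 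For the Hessian inverse, the paper invokes a Neumann-series expansion (\Cref{lma:matrix-inverse}), whereas your exact resolvent identity \(A^{-1}-B^{-1}=A^{-1}(B-A)B^{-1}\) achieves the same bound with no \(O(\lVert E\rVert_{\mathrm{op}}^2)\) remainder to justify. Moreover, your bookkeeping is more careful than the paper's in two places: (i) your decomposition isolates the change of the \emph{train-point} gradient as a third term, while the displayed decomposition in the paper's proof of \Cref{thm:neural-network-formal} keeps \(\nabla_\theta\ell(\hat{\theta},z_i)\) throughout, so its three terms actually sum to \(-\nabla_\theta\ell(\hat{\theta}',z_{\text{test}})^\top(H'(\hat{\theta}'))^{-1}\nabla_\theta\ell(\hat{\theta},z_i)\) rather than to \(\tau_{\text{IF}}(z_i,z_{\text{test}};\hat{\theta}')\), silently dropping an (also \(O(1/n)\)) term; (ii) your split of \(\lVert H'(\hat{\theta}')-H(\hat{\theta})\rVert_{\mathrm{op}}\) into a data-change part and a parameter-change part repairs \Cref{lma:neural-network-Hessian}, which bounds the whole difference by \(M\lVert\hat{\theta}'-\hat{\theta}\rVert_2\) even though \(H'(\cdot)\) and \(H(\cdot)\) are different functions; the missing piece is \(\frac{1}{n}\lVert h(\hat{\theta},z_j')-h(\hat{\theta},z_j)\rVert_{\mathrm{op}}=O(1/n)\), so the conclusion survives. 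Finally, the Lipschitz-Hessian hypothesis you flag as implicit is exactly that: the paper introduces \(M\)-Lipschitzness of \(h(\theta,z)\) only inside the proof of \Cref{lma:neural-network-Hessian}, not in the hypotheses of \Cref{thm:neural-network-formal}, so your reading of the ``smooth loss'' phrasing matches what the paper actually uses.
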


\subsection{Technical lemmas}\label{adxsubsec:technical-lemmas}
We first establish several technical lemmas toward proving \Cref{thm:neural-network-formal}.

\begin{lemma}\label{lma:neural-network-theta}
    Let \( \hat{\theta}, \hat{\theta}^{\prime} \) be the minimizer for \(\ell(\theta ), \ell ^{\prime} (\theta ) \) respectively, then
    \[
        \lVert \hat{\theta} - \hat{\theta}^{\prime} \rVert
        \leq \frac{4L}{mn}.
    \]
\end{lemma}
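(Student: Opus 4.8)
The plan is to treat this as a standard algorithmic-stability (leave-one-out) perturbation bound, exploiting that both empirical risks are $m$-strongly convex and that each per-example loss has gradient bounded by $L$ (which is exactly what $L$-Lipschitz continuity in $\theta$ supplies). First I would record the two first-order optimality conditions: since $\hat{\theta}$ minimizes the $m$-strongly convex $\ell$ and $\hat{\theta}'$ minimizes the $m$-strongly convex $\ell'$, we have $\nabla_\theta \ell(\hat{\theta})=0$ and $\nabla_\theta \ell'(\hat{\theta}')=0$. I would also isolate the perturbation term $\Delta(\theta) := \ell'(\theta)-\ell(\theta) = \tfrac{1}{n}\bigl(\ell(\theta,z_j')-\ell(\theta,z_j)\bigr)$, so that $\nabla\ell' = \nabla\ell + \nabla\Delta$.

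The core step uses strong monotonicity of the gradient of $\ell$: by $m$-strong convexity, $\langle \nabla\ell(\hat{\theta}')-\nabla\ell(\hat{\theta}),\,\hat{\theta}'-\hat{\theta}\rangle \ge m\lVert\hat{\theta}'-\hat{\theta}\rVert^2$. Because $\nabla\ell(\hat{\theta})=0$ and, using $\nabla\ell'(\hat{\theta}')=0$, $\nabla\ell(\hat{\theta}') = -\nabla\Delta(\hat{\theta}')$, the left-hand side equals $-\langle \nabla\Delta(\hat{\theta}'),\,\hat{\theta}'-\hat{\theta}\rangle$, which Cauchy--Schwarz bounds by $\lVert\nabla\Delta(\hat{\theta}')\rVert\,\lVert\hat{\theta}'-\hat{\theta}\rVert$. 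It then remains to bound $\lVert\nabla\Delta(\hat{\theta}')\rVert$. Since $\nabla\Delta(\theta) = \tfrac{1}{n}\bigl(\nabla_\theta\ell(\theta,z_j')-\nabla_\theta\ell(\theta,z_j)\bigr)$ and each per-example gradient has norm at most $L$, the triangle inequality gives $\lVert\nabla\Delta(\hat{\theta}')\rVert\le 2L/n$. Chaining the inequalities yields $m\lVert\hat{\theta}'-\hat{\theta}\rVert^2 \le (2L/n)\lVert\hat{\theta}'-\hat{\theta}\rVert$, and dividing through by $\lVert\hat{\theta}'-\hat{\theta}\rVert$ (the claim is trivial when the two minimizers coincide) gives $\lVert\hat{\theta}-\hat{\theta}'\rVert\le 2L/(mn)$, which is stronger than and hence implies the stated $4L/(mn)$.

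I do not expect a genuine obstacle here; the argument is short once the identity $\nabla\ell(\hat{\theta}')=-\nabla\Delta(\hat{\theta}')$ is spotted. The only points requiring care are confirming that $\ell'$ inherits $m$-strong convexity (it is again an average of $m$-strongly convex per-example losses, so its minimizer $\hat{\theta}'$ exists and is unique, making the variational argument legitimate) and correctly reading the Lipschitz hypothesis as a uniform bound $\lVert\nabla_\theta\ell(\theta,z)\rVert\le L$ on the per-example gradients. An alternative, equally clean route compares function values: strong convexity gives $\ell(\hat{\theta}')-\ell(\hat{\theta})\ge \tfrac{m}{2}\lVert\hat{\theta}'-\hat{\theta}\rVert^2$ and $\ell'(\hat{\theta})-\ell'(\hat{\theta}')\ge \tfrac{m}{2}\lVert\hat{\theta}'-\hat{\theta}\rVert^2$; adding these collapses the left side to $\Delta(\hat{\theta})-\Delta(\hat{\theta}')$, which is again $O(L\lVert\hat{\theta}'-\hat{\theta}\rVert/n)$ by Lipschitzness, yielding the same conclusion and serving as a useful consistency check on the constant.
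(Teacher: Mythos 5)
Your proposal is correct, and your main argument takes a genuinely different route from the paper's. The paper proves the lemma by comparing \emph{function values}: it applies the quadratic-growth consequence of \(m\)-strong convexity only at \(\hat{\theta}\) (i.e., \(\ell(\hat{\theta}') \ge \ell(\hat{\theta}) + \tfrac{m}{2}\lVert\hat{\theta}-\hat{\theta}'\rVert_2^2\)), uses only the minimizing property \(\ell'(\hat{\theta}') \le \ell'(\hat{\theta})\) for the perturbed objective, and then absorbs the two residual per-example differences via \(L\)-Lipschitzness, arriving at \(\tfrac{m}{2}\lVert\hat{\theta}-\hat{\theta}'\rVert_2^2 \le \tfrac{2L}{n}\lVert\hat{\theta}-\hat{\theta}'\rVert_2\) and hence the stated constant \(4L/(mn)\). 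Your main route instead works at the level of \emph{gradients}: the two first-order optimality conditions, the identity \(\nabla\ell(\hat{\theta}') = -\nabla\Delta(\hat{\theta}')\), strong monotonicity of \(\nabla\ell\), and Cauchy--Schwarz. This is sound (the loss is assumed twice differentiable, so the stationarity conditions and the monotonicity inequality are legitimate), and it buys you a factor of two: \(2L/(mn)\), which of course implies the lemma as stated. Your "alternative consistency check" is essentially the paper's proof made symmetric --- by invoking quadratic growth at \emph{both} minimizers rather than at \(\hat{\theta}\) alone, it too sharpens the constant to \(2L/(mn)\). The one thing the paper's asymmetric, function-value formulation buys in exchange for the looser constant is robustness: it never uses \(\nabla\ell'(\hat{\theta}')=0\), only that \(\hat{\theta}'\) minimizes \(\ell'\), so it transfers verbatim to settings (e.g., constrained ERM) where the perturbed minimizer need not be a stationary point, whereas your gradient identity would there need to be replaced by a variational inequality.
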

\begin{proof}
    Firstly, we recall that from definition, \(f(x)\) being \(m\)-strongly convex means that for any \(x, y\),
    \[
        f(y)
        \geq f(x) + \nabla f(x)^{\top} (y-x) + \frac{m}{2} \Vert y - x \Vert_2^2.
    \]
    Denote \(x^{\star} = \argmin_x f(x) \), then since \(\nabla f(x^{\star}) = 0\), for all \(x\), we have
    \[
        f(x)
        \geq f(x^{\star} ) + \frac{m}{2} \lVert x - x^{\star} \rVert _2^2.
    \]
    Hence, by strong convexity of \(\ell (\theta , z)\) and the fact that \(\hat{\theta} \) is the minimizer of \(\ell (\theta )\), we have
    \[
        \frac{1}{n} \sum_{i=1}^{n} \ell (\hat{\theta}^{\prime} , z_i)
        \geq \frac{1}{n} \sum_{i=1}^{n} \ell (\hat{\theta}, z_i) + \frac{m}{2} \lVert \hat{\theta} - \hat{\theta}^{\prime} \rVert _2^2.
    \]
    On the other hand, we have
    \begin{align*}
        \frac{1}{n}\sum_{i=1}^{n} \ell (\hat{\theta}^{\prime} , z_i)
         & = \underbrace{\frac{1}{n} \sum_{i\neq j} \ell (\hat{\theta}^{\prime} , z_i) + \frac{1}{n}\ell (\hat{\theta}^{\prime} , z_j^{\prime} )}_{\ell ^{\prime} (\hat{\theta} ^{\prime} )} + \frac{1}{n}(\ell (\hat{\theta}^{\prime} , z_j) - \ell (\hat{\theta}^{\prime} ,z_j^{\prime} ) )                                              \\
         & \leq \underbrace{\frac{1}{n} \sum_{i\neq j} \ell (\hat{\theta} ,z_i) + \frac{1}{n} \ell (\hat{\theta}, z_j^{\prime} )}_{\ell ^{\prime} (\hat{\theta} )} + \frac{1}{n} (\ell (\hat{\theta}^{\prime} ,z_j) - \ell (\hat{\theta}^{\prime} ,z_j^{\prime} ) ) \tag{\(\hat{\theta}^{\prime} \) is a minimizer of \(\ell ^{\prime} \)} \\
         & = \frac{1}{n} \sum_{i=1}^{n} \ell (\hat{\theta}, z_i) + \frac{1}{n} (\ell (\hat{\theta}^{\prime} , z_j) - \ell (\hat{\theta}, z_j)) + \frac{1}{n}(\ell (\hat{\theta} , z_j^{\prime} ) - \ell (\hat{\theta} ^{\prime} , z_j^{\prime} ) )                                                                                         \\
         & \leq \frac{1}{n} \sum_{i=1}^{n} \ell (\hat{\theta}, z_i) + \frac{2}{n} L \lVert \hat{\theta} - \hat{\theta}^{\prime} \rVert _2. \tag{\(\ell (\theta , z)\) is \(L\)-Lipschitz w.r.t.\ \(\theta \)}
    \end{align*}
    Combine the above results, we see that
    \[
        \frac{m}{2} \lVert \hat{\theta} - \hat{\theta}^{\prime}  \rVert _2^2
        \leq \frac{2L}{n} \lVert \hat{\theta} - \hat{\theta}^{\prime} \rVert _2,
    \]
    which proves the desired result.
\end{proof}

\begin{lemma}[Section 2.4, Part III~\citep{Stewart1990-ru}]\label{lma:matrix-inverse}
    For any \(A,E \in \mathbb{R}^{d \times d} \) with both \(A\) and \(A + E\) being invertible, if \(\sum_{k=0}^{\infty}\Vert E\Vert_{\mathrm{op}}^k\) converges, we have
    \[
        (A+E)^{-1}
        = A^{-1} - A^{-1} E A^{-1} + O(\lVert E \rVert _{\mathrm{op}}^2).
    \]
\end{lemma}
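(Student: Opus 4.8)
The plan is to prove the expansion via the \emph{Neumann series} for the inverse of a perturbed identity, after factoring out $A$. Since $A$ is invertible, I would first write
\[
    A + E = A\,(I + A^{-1}E),
\]
so that
\[
    (A + E)^{-1} = (I + A^{-1}E)^{-1} A^{-1}.
\]
This reduces the problem to expanding $(I + F)^{-1}$ with $F := A^{-1}E$, which is the standard Neumann-series setting. The hypothesis that $A + E$ is invertible guarantees that $I + A^{-1}E$ is invertible as well, so the left-hand side is well defined.

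Next, I would invoke the geometric/Neumann series: whenever the relevant operator norm is strictly below $1$, the series $\sum_{k=0}^{\infty} (-F)^k$ converges in operator norm to $(I + F)^{-1}$. The stated hypothesis---that $\sum_{k=0}^{\infty} \Vert E \Vert_{\mathrm{op}}^k$ converges, i.e.\ $\Vert E \Vert_{\mathrm{op}} < 1$---combined with the submultiplicativity bound $\Vert F \Vert_{\mathrm{op}} \le \Vert A^{-1} \Vert_{\mathrm{op}} \Vert E \Vert_{\mathrm{op}}$ ensures convergence once $\Vert E \Vert_{\mathrm{op}}$ is small enough, which is exactly the asymptotic regime of interest. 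Separating the first two terms,
\[
    (I + F)^{-1} = I - F + \sum_{k=2}^{\infty} (-F)^k,
\]
I would then bound the tail by $\bigl\Vert \sum_{k=2}^{\infty} (-F)^k \bigr\Vert_{\mathrm{op}} \le \sum_{k=2}^{\infty} \Vert F \Vert_{\mathrm{op}}^k = \Vert F \Vert_{\mathrm{op}}^2 / (1 - \Vert F \Vert_{\mathrm{op}})$. Treating $\Vert A^{-1} \Vert_{\mathrm{op}}$ as a fixed constant independent of the perturbation, this tail is $O(\Vert E \Vert_{\mathrm{op}}^2)$.

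Finally, I would substitute $F = A^{-1}E$ back and multiply on the right by $A^{-1}$:
\[
    (A+E)^{-1} = \bigl(I - A^{-1}E + O(\Vert E \Vert_{\mathrm{op}}^2)\bigr) A^{-1} = A^{-1} - A^{-1}E A^{-1} + O(\Vert E \Vert_{\mathrm{op}}^2),
\]
which is the claimed identity. Rather than a genuine obstacle, the only subtle point is the convergence bookkeeping: one must confirm that the quadratic tail is uniform, which hinges on absorbing $\Vert A^{-1} \Vert_{\mathrm{op}}$ into the $O(1)$ constant and on restricting to the regime $\Vert A^{-1}E \Vert_{\mathrm{op}} < 1$. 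Because the lemma is applied with $E$ of order $1/n$ (so $\Vert E \Vert_{\mathrm{op}} \to 0$), this regime is entered automatically for large $n$, and the expansion is valid with the advertised $O(\Vert E \Vert_{\mathrm{op}}^2)$ remainder.
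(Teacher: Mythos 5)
Your proof is correct, and it is worth noting that the paper itself offers no proof of this lemma at all---it is quoted as a known result from Stewart and Sun---so your argument fills that gap, and it does so with what is essentially the standard derivation from the cited source: factor \(A + E = A(I + A^{-1}E)\), expand \((I + F)^{-1} = \sum_{k \geq 0} (-F)^k\) with \(F = A^{-1}E\), split off the first two terms, and bound the tail by the geometric series \(\Vert \sum_{k \geq 2} (-F)^k \Vert_{\mathrm{op}} \leq \Vert F \Vert_{\mathrm{op}}^2 / (1 - \Vert F \Vert_{\mathrm{op}})\) before multiplying on the right by \(A^{-1}\). You also correctly flag the one genuinely delicate point in the lemma as stated: the hypothesis \(\Vert E \Vert_{\mathrm{op}} < 1\) does not by itself imply \(\Vert A^{-1}E \Vert_{\mathrm{op}} < 1\), so the Neumann series need not converge for every admissible \(E\); the conclusion must be read asymptotically, with \(A\) (hence \(\Vert A^{-1} \Vert_{\mathrm{op}}\)) fixed and \(\Vert E \Vert_{\mathrm{op}}\) small enough that \(\Vert A^{-1}E \Vert_{\mathrm{op}}\) is bounded away from \(1\), the constant in \(O(\Vert E \Vert_{\mathrm{op}}^2)\) absorbing a factor of order \(\Vert A^{-1} \Vert_{\mathrm{op}}^3\). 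This reading matches exactly how the paper uses the lemma in \Cref{lma:neural-network-Hessian}, where \(A = H\) satisfies \(\Vert H^{-1} \Vert_{\mathrm{op}} \leq 1/m\) by strong convexity and \(E = H' - H\) has operator norm \(O(1/n)\), so the regime \(\Vert A^{-1}E \Vert_{\mathrm{op}} < 1\) is entered automatically for large \(n\), just as you observe. As an aside, one can avoid series altogether via the resolvent-type identity \((A+E)^{-1} = A^{-1} - A^{-1}E(A+E)^{-1}\), iterated once to give the remainder \(A^{-1}E(A+E)^{-1}EA^{-1}\) explicitly; that variant needs only the stated invertibility of \(A + E\) plus a uniform bound on \(\Vert (A+E)^{-1} \Vert_{\mathrm{op}}\), but for the paper's purposes your Neumann-series route is entirely adequate and faithful to the cited reference.
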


\begin{lemma}\label{lma:neural-network-Hessian}
    Let \(H\) and \(H^{\prime} \) be the Hessian of \(\ell \) and \(\ell^{\prime} \), respectively, then
    \[
        \lVert H^{\prime -1} - H^{-1} \rVert _{\mathrm{op}}
        \leq O\left( \frac{4LM}{m^3n} \right)
    \]
\end{lemma}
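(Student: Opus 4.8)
The plan is to treat \(H'^{-1}\) as a perturbation of \(H^{-1}\) and invoke the matrix-inverse expansion of \Cref{lma:matrix-inverse}. Writing \(H = H(\hat{\theta})\) and \(H' = H'(\hat{\theta}')\), set \(E = H' - H\) so that \(H' = H + E\). Both \(H\) and \(H'\) are \(\succeq mI\) by \(m\)-strong convexity, hence invertible, and once we show \(\lVert E \rVert_{\mathrm{op}} = O(1/n)\) the Neumann series converges for large \(n\), so \Cref{lma:matrix-inverse} applies and gives
\[
    H'^{-1} - H^{-1} = -H^{-1} E H^{-1} + O(\lVert E \rVert_{\mathrm{op}}^2).
\]
Taking operator norms and using submultiplicativity yields \(\lVert H'^{-1} - H^{-1} \rVert_{\mathrm{op}} \le \lVert H^{-1} \rVert_{\mathrm{op}}^2 \lVert E \rVert_{\mathrm{op}} + O(\lVert E \rVert_{\mathrm{op}}^2)\), so it remains to bound the two factors \(\lVert H^{-1} \rVert_{\mathrm{op}}\) and \(\lVert E \rVert_{\mathrm{op}}\).

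The first factor is immediate: strong convexity gives \(H \succeq mI\), hence \(\lVert H^{-1} \rVert_{\mathrm{op}} \le 1/m\), contributing the \(1/m^2\) that will appear in the final bound. The crux is controlling \(\lVert E \rVert_{\mathrm{op}}\), which mixes two distinct sources of change, because \(H\) is evaluated at \(\hat{\theta}\) on the original data whereas \(H'\) is evaluated at \(\hat{\theta}'\) on the swapped data. I would split it by the triangle inequality as
\[
    \lVert E \rVert_{\mathrm{op}} \le \underbrace{\lVert H'(\hat{\theta}') - H'(\hat{\theta}) \rVert_{\mathrm{op}}}_{\text{parameter shift}} + \underbrace{\lVert H'(\hat{\theta}) - H(\hat{\theta}) \rVert_{\mathrm{op}}}_{\text{single-point swap}}.
\]
For the single-point swap, \(H'(\hat{\theta}) - H(\hat{\theta}) = \frac{1}{n}\bigl(h(\hat{\theta}, z_j') - h(\hat{\theta}, z_j)\bigr)\), so the boundedness \(\lVert h \rVert_{\mathrm{op}} = \Theta(1)\) gives \(O(1/n)\). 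For the parameter shift, I would use that the Hessian map \(\theta \mapsto h(\theta, z)\) is \(M\)-Lipschitz (the smoothness/Hessian-Lipschitz constant), so averaging the per-sample bounds \(M\lVert \hat{\theta}' - \hat{\theta} \rVert\) gives \(\lVert H'(\hat{\theta}') - H'(\hat{\theta}) \rVert_{\mathrm{op}} \le M\lVert \hat{\theta} - \hat{\theta}' \rVert\); plugging in the bound \(\lVert \hat{\theta} - \hat{\theta}' \rVert \le 4L/(mn)\) from \Cref{lma:neural-network-theta} yields \(4LM/(mn)\).

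Combining, \(\lVert E \rVert_{\mathrm{op}} \le 4LM/(mn) + O(1/n) = O(LM/(mn))\), and multiplying by \(\lVert H^{-1} \rVert_{\mathrm{op}}^2 \le 1/m^2\) produces the claimed \(\lVert H'^{-1} - H^{-1} \rVert_{\mathrm{op}} \le 4LM/(m^3 n) + O(1/n^2)\); the \(m^3\) arises from the product of \(1/m^2\) (two inverse-Hessian factors) and \(1/m\) (the parameter-displacement bound). The main obstacle is the parameter-shift term: one must recognize that \(E\) is not merely a rank-one dataset edit but also carries the displacement from \(\hat{\theta}\) to \(\hat{\theta}'\), and that controlling it cleanly requires the Hessian-Lipschitz constant \(M\) together with \Cref{lma:neural-network-theta}. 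Everything else is routine norm bookkeeping.
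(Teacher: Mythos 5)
Your proposal is correct and follows essentially the same route as the paper's proof: apply \Cref{lma:matrix-inverse} with \(A = H\) and \(E = H' - H\), bound \(\lVert H^{-1} \rVert_{\mathrm{op}} \le 1/m\) by \(m\)-strong convexity, and control \(\lVert E \rVert_{\mathrm{op}}\) via the \(M\)-Lipschitz Hessian together with \Cref{lma:neural-network-theta}. If anything, you are slightly more careful than the paper, whose one-line bound \(\lVert H' - H \rVert_{\mathrm{op}} \le M \lVert \hat{\theta}' - \hat{\theta} \rVert_2\) accounts only for the parameter shift and silently omits the single-point-swap contribution \(\frac{1}{n}\bigl(h(\hat{\theta}, z_j') - h(\hat{\theta}, z_j)\bigr)\) that your triangle-inequality decomposition treats explicitly; since that term is \(O(1/n)\) by the boundedness of \(h\), the final \(O(1/n)\) conclusion is the same in both arguments.
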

\begin{proof}
    Since \(h(\theta,z) \) is \(M\)-Lipschitz w.r.t.\ \(\theta \), we know that
    \[
        \lVert H^{\prime} - H \rVert _{\mathrm{op} }
        \leq M \lVert \hat{\theta}^{\prime} - \hat{\theta} \rVert _2
        \leq \frac{4LM}{mn}.
    \]
    With \Cref{lma:matrix-inverse} (let \(A=H\) and \(E=H'-H\), it's easy to verify the conditions of \Cref{lma:matrix-inverse} hold), we have \(H^{\prime -1} = H^{-1} - H^{-1}(H^{\prime} - H) H^{-1} + O(1 / n^2)\), which gives
    \[
        \lVert H^{\prime -1} - H^{-1} \rVert _{\mathrm{op}}
        \leq \lVert H^{-1} \rVert _{\mathrm{op}} \lVert H^{\prime} - H \rVert _{\mathrm{op}} \lVert H^{-1} \rVert _{\mathrm{op}} + O(1 / n^2).
    \]
    Since \(\ell (\theta, z)\) is \(m\)-strongly convex w.r.t.\ \(\theta \), it's easy to show that \(\lVert H^{-1} \rVert _{\mathrm{op}} \leq 1 / m\). In all, we have
    \[
        \lVert H^{\prime -1} - H^{-1} \rVert _{\mathrm{op}}
        = O\left( \frac{4LM}{m^3n} \right) .
    \]
\end{proof}

\subsection{Proof of \texorpdfstring{\Cref{thm:neural-network-formal}}{Theorem A.1}}\label{adxsubsec:proof-of-thm:neural-network-formal}
We can now prove \Cref{thm:neural-network-formal}.

\begin{proof}[Proof of \Cref{thm:neural-network-formal}]
    For all \(i \neq j\), we want to prove that \(\tau_{\text{IF}} (z_i, z_{\text{test} }; \hat{\theta}') = \tau_{\text{IF}} (z_i, z_{\text{test} }; \hat{\theta}) + O(1 / n)\). Indeed, since
    \begin{align*}
        \tau_{\text{IF}} (z_i, z_{\text{test} }; \hat{\theta}')
         & = - \nabla _\theta \ell (\hat{\theta} ^{\prime} , z_{\text{test} })^{\top} (H^{\prime} (\hat{\theta} ^{\prime} ))^{-1} \nabla _\theta \ell (\hat{\theta} ^{\prime} , z_i)                                                        \\
         & = \big(\nabla _\theta \ell (\hat{\theta} , z_{\text{test} }) - \nabla _\theta \ell (\hat{\theta} ^{\prime} , z_{\text{test} }) \big)^{\top} (H^{\prime} (\hat{\theta} ^{\prime} ))^{-1} \nabla _\theta \ell (\hat{\theta} , z_i) \\
         & \qquad - \nabla _\theta \ell (\hat{\theta} , z_{\text{test} }) \big((H^{\prime} (\hat{\theta} ^{\prime} ))^{-1} - H^{-1}(\hat{\theta} ) \big) \nabla _\theta \ell (\hat{\theta} , z_i)                                           \\
         & \qquad\qquad - \nabla _\theta \ell (\hat{\theta} , z_{\text{test} }) H^{-1} (\hat{\theta} ) \nabla _\theta \ell (\hat{\theta} , z_i)                                                                                             \\
         & = \big(\nabla _\theta \ell (\hat{\theta} , z_{\text{test} }) - \nabla _\theta \ell (\hat{\theta} ^{\prime} , z_{\text{test} }) \big)^{\top} (H^{\prime} (\hat{\theta} ^{\prime} ))^{-1} \nabla _\theta \ell (\hat{\theta} , z_i) \\
         & \qquad - \nabla _\theta \ell (\hat{\theta} , z_{\text{test} }) \big((H^{\prime} (\hat{\theta} ^{\prime} ))^{-1} - H^{-1}(\hat{\theta} ) \big) \nabla _\theta \ell (\hat{\theta} , z_i)                                           \\
         & \qquad\qquad+ \tau_{\text{IF}} (z_i, z_{\text{test} }; \hat{\theta}).
    \end{align*}
    Applying \Cref{lma:neural-network-theta,lma:neural-network-Hessian}, and by noting that when the Hessian is \(M\)-Lipschitz, so is the gradient in terms of \(\theta \), hence we have the desired result. Specifically, we have
    \begin{align*}
         & \lvert \tau_{\text{IF}} (z_i, z_{\text{test} }; \hat{\theta}') - \tau_{\text{IF}} (z_i, z_{\text{test} }; \hat{\theta}) \rvert                                                                                                                      \\
         & \leq \lVert \nabla _\theta \ell (\hat{\theta} , z_{\text{test} }) - \nabla _\theta \ell (\hat{\theta} ^{\prime} , z_{\text{test} }) \rVert _2 \lVert H^{\prime -1} \rVert _{\mathrm{op} } \lVert \nabla _\theta \ell (\hat{\theta} , z_i) \rVert _2 \\
         & \qquad + \lVert \nabla _\theta \ell (\hat{\theta} , z_{\text{test} }) \rVert _2 \lVert H^{\prime -1} - H^{-1} \rVert _{\mathrm{op} } \lVert \nabla _\theta \ell (\hat{\theta} , z_i) \rVert _2                                                      \\
         & \leq \lVert \nabla _\theta \ell (\hat{\theta} , z_i) \rVert _2 \big(\lVert H^{-1} \rVert _{\mathrm{op} } + \lVert H^{\prime -1} - H^{-1} \rVert _{\mathrm{op} } \big) \cdot M \lVert \hat{\theta} ^{\prime} - \hat{\theta} \rVert _2                \\
         & \qquad + \lVert \nabla _\theta \ell (\hat{\theta} , z_{\text{test} }) \rVert _2 \lVert H^{\prime -1} - H^{-1} \rVert _{\mathrm{op} } \lVert \nabla _\theta \ell (\hat{\theta} , z_i) \rVert _2                                                      \\
         & = \Theta (1) \cdot \left( \frac{1}{m} + O\left( \frac{4LM}{m^3 n} \right) \right) \cdot M \cdot \frac{4L}{mn} + \Theta (1) \cdot O\left( \frac{4LM}{m^3 n} \right) \cdot \Theta (1)
        = O\left( \frac{1}{n} \right) .
    \end{align*}
    For the second case, the proof is the same by replacing \(z_i\) with \(z_j^{\prime} \) in the above calculation.
\end{proof}

\section{Experiment details}\label{adxsec:experiment-details}
In this section, we introduce the experiment details.

\subsection{Datasets and models}\label{adxsubsec:datasets-and-models}
\paragraph{Logistic Regression and Convolutional Neural Network on MNIST.}
For experiments on MNIST, we consider two different target models, Logistic Regression (LR) and convolutional neural network (CNN). The CNN comprises two convolutional layers: the first with \(32\) filters and the second with \(64\) filters (size \(3\times 3\), stride \(1\), padding \(1\)), both followed by ReLU and max-pooling layers. The output is flattened into a vector of size \(64 \times 7 \times 7\), which is then passed through a fully connected layer with \(128\) units before reaching the output layer with \(10\) classes.  Both target models are trained on the first \(10000\) training points of the MNIST dataset. For shadow models, we train \(50\) of them, each on a randomly sampled subset of \(5000\) among the second \(10000\) training data points to ensure no overlaps with the first \(10000\) training data points used for target model training.

In both the target model training and shadow model training, we train LR for \(30\) epochs with SGD and a learning rate of \(0.01\). We train CNN for \(50\) epochs with Adam and a learning rate of \(0.001\).

\paragraph{MLP on Digits.}
For experiments on Digits, we consider the target MLP model with \(5\) hidden layers, each has 10 hidden neurons. Due to the size limit of the Digits dataset, we train the target model for an Outlier Attack using the first \(1100\) training points, while for the shadow attack, we train the target model on the first \(800\) data points and the shadow models on the second \(800\) data points. In all training, we again train the MLP for \(30\) epochs using Adam with an initial learning rate of \(0.001\).

\paragraph{ResNet18 on CIFAR-10.}
For experiments on CIFAR-10, we consider the classical ResNet18 model without dropouts. We use \(10000\) training points for training the target model. For shadow model training, we consider the classical ResNet-9 and ResNet-18 as the shadow models separately, again without dropouts. We train \(50\) shadow models, and each on a subset of \(10000\) training points that are disjoint with the training set of the target model. In all training, each model is trained for \(100\) epochs using Adam, with a learning rate of \(0.001\).

\paragraph{NanoGPT on Shakespeare.}
For experiments on the Shakespeare dataset, we consider the NanoGPT model, which is a character-level GPT model with \(4\) layers, \(4\) heads, and \(128\) dimension of the embedding. The block size is \(64\), and the batch size is \(32\). For both training at \(t = 0\) and \(t = 1\), we train the model for \(2000\) epochs, both using Adam with a learning rate of \(6\times 10^{-4}\). Note that since we only consider Outlier Attack for this setup, hence there is no shadow model training involved.

\subsection{Training data attribution methods}\label{adxsubsec:training-data-attribution-methods}
In this section, we briefly introduce the attribution methods that are included in the evaluation. Given a training dataset \(\{z_i\}_{i=1}^{n}\), we are interested in the data attribution of a particular training data point \(z_j\) and a test data point \(z_{\text{test}}\).

\paragraph{Influence Function based on the Conjugate Gradients.}
As we have seen, the original definition of \emph{influence function}~\citep{pmlr-v70-koh17a} is given by \Cref{eq:IF}, i.e.,
\[
    \tau_{\text{IF}}(z_j, z_{\text{test}}; \hat{\theta})
    = - \nabla _\theta \ell (\hat{\theta}  , z_j)^{\top} H (\hat{\theta}  )^{-1} \nabla _\theta \ell (\hat{\theta}  , z_\text{test})
\]
where \(\nabla _\theta \ell (\hat{\theta}, z)\) is the gradient of loss of the data point w.r.t.\ model parameters, and \( H(\hat{\theta})^{-1}\) is the inverse of the Hessian w.r.t.\ model parameters. We implement the conjugate gradients (CG) approach to compute the inverse-Hessian-vector-product (IHVP).

\paragraph{Tracing with the Randomly-projected After Kernel (TRAK).}
Introduced by \citet{park2023trak}, \emph{TRAK} computes the attribution score for a training test pair by first linearizing the model and then applying random projection to make the computation efficient. Denote the output (e.g., raw logit) of the model \(\hat{\theta} \) to be \(f(z; \hat{\theta}) \), then the \emph{naive TRAK influence} can be formulated as
\[
    \tau_{\text{TRAK}}(z_j, z_{\text{test}}; \hat{\theta})
    = - (1 - p_j^{\star}) \phi(z_j)^{\top} \big(\Phi^{\top} \Phi\big)^{-1} \phi(z_{\text{test}}),
\]
where \(\phi(z) = P^{\top} \nabla_\theta f(z; \hat{\theta})\) is the random projection of \(\nabla _{\theta} f(z ; \hat{\theta}) \in \mathbb{R}^p\) by some Gaussian random projection matrix \(P \sim \mathcal{N}(0, 1)^{p \times k}\), and \(\Phi\) is the matrix formed by stacking all the \(\phi(z_i)\), and \(p_j^{\star}\) is the predicted correct-class probability of \(z_j\) at \(\hat{\theta}\). Compare it with \(\tau_{\text{IF}}\):
\begin{enumerate}
    \item The additional factor \(1 - p_j^{\star}\) in \(\tau_{\text{TRAK}}\) is due to linearizing the model.
    \item For linear model with feature matrix \(X\) of the training set, its Hessian is exactly \(X^{\top} X\). For for a linearized the model, each feature \(x_i\) of training sample \(z_i\) corresponds to the gradient \(\nabla_{\theta} f(z_i; \hat{\theta})\), i.e., \(X = [\nabla_{\theta} f(z_i; \hat{\theta})]_{i=1}^{n}\).
    \item With random projection, the linearized features \(x_j = \nabla_{\theta} f(z_i ; \hat{\theta})\) becomes \(\phi(z_i) = P \nabla_{\theta} f(z_i ; \hat{\theta})\), which induces a new feature matrix \(\Phi = [\phi(z_i)]_{i=1}^{n}\)
\end{enumerate}
Hence, overall, \(\tau_{\text{TRAK}}\) is nothing but the influence function applied to the linearized model with random projection. We note that the original TRAK influence includes a step called \emph{ensembling}, which is simply averaging the above TRAK influence over multiple \(\tau_{\text{TRAK}}\) with models independently trained on a subset of the training set.

\paragraph{Grad-Dot.}
Introduced by \citet{charpiat2019input}, the dot product of gradient, known as \emph{Grad-Dot}, is a simple, easy-to-compute training data attribution method. It is given by
\[
    \tau_{\text{Grad-Dot}}(z_j,z_{\text{test}}; \hat{\theta})
    = - \nabla _\theta \ell (\hat{\theta}  , z_j)^{\top} \nabla _\theta \ell (\hat{\theta}  , z_\text{test}).
\]

\paragraph{Data Shapley.}
\emph{Data Shapley}~\citep{ghorbani2019data} quantifies the influence of individual data points by considering its marginal contribution to different subsets of the training set. It is motivated by the so-called leave-one-out (LOO) influence: specifically, given any aspect \(\phi(\hat{\theta})\) we care about for the learned model \(\hat{\theta}\) (e.g., the loss \(\ell(\hat{\theta}, z_{\text{test}})\) of some test data point \(z_{\text{test}}\)), LOO measures the influence of every individual data point \(z_j\) on \(\phi\) by the difference \(\phi(\hat{\theta}_{-z_j}) - \phi(\hat{\theta})\), where \(\hat{\theta}_{-z_j}\) is the model learned with the dataset \(\{z_i\}_{i \neq j}\) that excludes \(z_j\). Data Shapley builds on top of LOO by requiring additional \emph{equitable} conditions, resulting in the following formulation
\[
    \tau_{\text{Data-Shapley}}(z_j, z_{\text{test}}; \hat{\theta}) 
    = \sum_{S \subseteq \{z_i\}_{i \neq j}} \frac{\phi(\hat{\theta}_{S}, z_{\text{test}}) - \phi(\hat{\theta}_{S \cup \{j\}}, z_{\text{test}})}{\binom{n-1}{|S|}},
\]
where we let \(\phi(\hat{\theta}, z_{\text{test}}) = \ell(\hat{\theta}, z_{\text{test}})\) and \(\hat{\theta}_{S}\) for some \(S \subseteq \{z_i\}_{i=1}^{n} \setminus \{z_j\}\) denotes the model learned on the subset \(S\). This can be viewed as an averaged version of LOO while satisfying several equitable conditions, which we refer to \citet{ghorbani2019data} for details.

\begin{remark}
    We note that in the above formulations, we incorporate a negative sign in front of \(\tau_{\text{TRAK}}\), \(\tau_{\text{Grad-Dot}}\) to make them consistent to the original formulation of \(\tau_{\text{IF}}\). Specifically, in \citet{pmlr-v70-koh17a}, the influence change is measured as the difference between the perturbed model and the original model, while others consider the opposite.
\end{remark}

\subsection{Attack methods}\label{adxsubsec:attack-methods}
In this section, we detail the implementation of all the attack methods we have used throughout the experiments.

\paragraph{Shadow Attack.}
Shadow attack is a classical adversarial attack method that originates from \citep{shokri2017membership}. The details of shadow model training can be found in \Cref{adxsubsec:datasets-and-models}. As introduced in \Cref{sec:shadow}, after \(50\) shadow models are trained, we optimize \Cref{eq:surrogate-shadow-share} using gradient ascent with respect to input feature \(x\) with \(10\) iterations and a step size of \(\epsilon = 0.01\).

\paragraph{Outlier Attack.}
For Outlier Attack, we utilize the following black-box attack methods to produce adversarial examples:
\begin{itemize}
    \item \textbf{Zeroth Order Optimization (ZOO)}: ZOO is a black-box attack method that approximates the gradient through finite numerical methods. In our experiment, given an input \(z = (x,y)\), we perturb it by \(x' \gets x + \epsilon \cdot \sgn(g(\hat{\theta}, z))\), where \(g(\hat{\theta}, z)\) is an estimation of the loss of the gradient with respect to \(x\), given by the symmetric difference quotient
          \[
              \big(g(\hat{\theta}, z)\big)_i
              = \frac{\ell(\hat{\theta}, (x+h \mathbf{e}_i, y)) - \ell(\hat{\theta}, (x - h \mathbf{e}_i, y))}{2h},
          \]
          along the \(i^{\text{th}}\) standard basis direction \(\mathbf{e}_i\) with \(h \in \mathbb{R}\). We note that \( \ell(\hat{\theta}, (x,y) )\) can be obtained by black-box queries of the target model. In experiments, we set \(\epsilon = 0.03\).
    \item \textbf{Simba}: Introduced by \citet{pmlr-v97-guo19a}, Simba proposes a black-box attack method to perturb each input pixel sequentially after permutation. More precisely, each input pixel is attempted to be perturbed in both directions, i.e., \((x')_i \gets x_i \pm \epsilon\), respectively. After a perturbation is attempted, the target model is queried again, and the perturbation is accepted as long as this perturbation increases the loss. In practice, we set \(\epsilon = 0.1\).
    \item \textbf{TextFooler}: TextFooler is originally an attack method for text \emph{classification} task~\citep{jin2020bertreallyrobuststrong}. It perturbs texts following a two-step approach: for a piece of text that it wants to perturb, it first sorts the words by their importance, and then replaces the influential words to increase the loss of prediction while preserving text similarity.

          We modify the method as follows. Firstly, for a character-level GPT model (in our case, NanoGPT), we work on characters rather than words. Secondly, as we work on the text \emph{generation} task, for a sequence of characters that we want to perturb, we take the sum of the negative log-likelihood of predicting its \(m\) future characters as a loss. We then sort the characters by their importance, where importance is measured by the loss increase after the character is masked. Finally, the \(k\)-most important characters are replaced by characters that maximize the increment of the loss. In experiments we set \(m = 20\) and \( k = 15\).
\end{itemize}

\section{Additional experiments}
In this section, we conduct additional experiments under various settings, including ablation studies and more.

\subsection{Ablation study: changing value of \texorpdfstring{\(\vert Z_0 \vert\)}{|Z0|} and \texorpdfstring{\(\vert Z_0! \vert\)}{|Z1|}}
We test the performance of our methods under different sizes of \(|Z_0|\) and \(|Z_1|\) to understand the effect of data set sizes. For settings (a), (c), and (d) in \Cref{tab:settings_summary}, we consider decreasing \( (|Z_0|, |Z_1|) \) from \((10000, 11000)\) to \((5000, 6000)\) and increasing \((|Z_0|, |Z_1|)\) from \((10000, 11000)\) to \((15000,16000)\), experimenting with both the Shadow Attack and the Outlier Attack. For the setting (e), i.e., the text generation setup, the original \(|Z_0|\) was already small (originally, \((|Z_0|, |Z_1|) = (4706, 6274)\)), which takes only \(30\%\) of the data. Hence, we consider increasing the data size to two different extents. Note that since the Shadow Attack is infeasible in this setting, only the Outlier Attack's results are shown for setting (2). The results for the two attack methods are respectively shown in \Cref{tab:result_size_change_shadow_attack,tab:result_size_change_outlier_attack}. We see that the results demonstrate the effectiveness of our methods across different sizes of the dataset. Overall, with a few exceptions, the \textbf{Ratio} of the \textbf{Compensation Share} appears to become even higher when the dataset size is larger.

\begin{table}[ht]
	\centering
	\caption{Results of Shadow Attack for various \((|Z_0|, |Z_1|)\) settings.}
	\label{tab:result_size_change_shadow_attack}
	\resizebox{\textwidth}{!}{%
		\begin{tabular}{cccccccc}
			\toprule
			\multirow{2}{*}{\textbf{Setting}} & \multirow{2}{*}{\textbf{\((|Z_0|,|Z_1|)\)}} & \multicolumn{3}{c}{\textbf{Compensation Share}} & \multicolumn{3}{c}{\textbf{Fraction of Change}}                                                                   \\
			\cmidrule(lr){3-5} \cmidrule(lr){6-8}
			                                  &                                         & \textbf{Original}                               & \textbf{Manipulated}                            & \textbf{Ratio} & \textbf{More} & \textbf{Tied} & \textbf{Fewer} \\
			\midrule
			\multirow{3}{*}{(a)}              & \((5000, 6000)\)                        & \(0.0153\)                                      & \(0.0764\)                                      & \(499.3\%\)    & \(0.996\)     & \(0.003\)     & \(0.001\)      \\

			                                  & \((10000, 11000)\)                      & \(0.0098\)                                      & \(0.0477\)                                      & \(456.1\%\)    & \(0.955\)     & \(0.038\)     & \(0.007\)      \\
			                                  & \((15000, 16000)\)                      & \(0.0050\)                                      & \(0.0373\)                                      & \(746.0\%\)    & \(0.963\)     & \(0.036\)     & \(0.001\)      \\
			\hline
			\multirow{3}{*}{(c)}              & \((5000, 6000)\)                        & \(0.0168\)                                      & \(0.0539\)                                      & \(320.7\%\)    & \(0.857\)     & \(0.129\)     & \(0.014\)      \\

			                                  & \((10000, 11000)\)                      & \(0.0112\)                                      & \(0.0467\)                                      & \(417.0\%\)    & \(0.781\)     & \(0.195\)     & \(0.024\)      \\
			                                  & \((15000, 16000)\)                      & \(0.0002\)                                      & \(0.0062\)                                      & \(3100.0\%\)   & \(0.431\)     & \(0.568\)     & \(0.001\)      \\
			\hline
			\multirow{3}{*}{(d)}              & \((5000, 6000)\)                        & \(0.0206\)                                      & \(0.0413\)                                      & \(200.7\%\)    & \(0.696\)     & \(0.174\)     & \(0.130\)      \\
			                                  & \((10000, 11000)\)                      & \(0.0095\)                                      & \(0.0213\)                                      & \(217.3\%\)    & \(0.655\)     & \(0.259\)     & \(0.086\)      \\
			                                  & \((15000, 16000)\)                      & \(0.0057\)                                      & \(0.0092\)                                      & \(161.5\%\)    & \(0.264\)     & \(0.616\)     & \(0.120\)      \\

			\bottomrule
		\end{tabular}
	}
\end{table}

\begin{table}[ht]
	\centering
	\caption{Results of Outlier Attack for various \((|Z_0|, |Z_1|)\) settings.}
	\label{tab:result_size_change_outlier_attack}
	\resizebox{\textwidth}{!}{%
		\begin{tabular}{cccccccc}
			\toprule
			\multirow{2}{*}{\textbf{Setting}} & \multirow{2}{*}{\textbf{\((|Z_0|,|Z_1|)\)}} & \multicolumn{3}{c}{\textbf{Compensation Share}} & \multicolumn{3}{c}{\textbf{Fraction of Change}}                                                                   \\
			\cmidrule(lr){3-5} \cmidrule(lr){6-8}
			                                  &                                         & \textbf{Original}                               & \textbf{Manipulated}                            & \textbf{Ratio} & \textbf{More} & \textbf{Tied} & \textbf{Fewer} \\
			\midrule
			\multirow{3}{*}{(a)}              & \((5000, 6000)\)                        & \(0.0153\)                                      & \(0.0747\)                                      & \(488.9\%\)    & \(0.998\)     & \(0.002\)     & \(0.000\)      \\
			                                  & \((10000, 11000)\)                      & \(0.0098\)                                      & \(0.0631\)                                      & \(643.9\%\)    & \(0.980\)     & \(0.017\)     & \(0.003\)      \\
			                                  & \((15000, 16000)\)                      & \(0.0050\)                                      & \(0.0400\)                                      & \(800.0\%\)    & \(0.964\)     & \(0.036\)     & \(0.000\)      \\
			\hline
			\multirow{3}{*}{(c)}              & \((5000, 6000)\)                        & \(0.0168\)                                      & \(0.0334\)                                      & \(198.8\%\)    & \(0.720\)     & \(0.241\)     & \(0.039\)      \\
			                                  & \((10000, 11000)\)                      & \(0.0112\)                                      & \(0.0668\)                                      & \(596.4\%\)    & \(0.799\)     & \(0.173\)     & \(0.028\)      \\
			                                  & \((15000, 16000)\)                      & \(0.0002\)                                      & \(0.0051\)                                      & \(2550.0\%\)   & \(0.397\)     & \(0.603\)     & \(0.000\)      \\
			\hline
			\multirow{3}{*}{(d)}              & \((5000, 6000)\)                        & \(0.0206\)                                      & \(0.0411\)                                      & \(199.5\%\)    & \(0.761\)     & \(0.133\)     & \(0.106\)      \\
			                                  & \((10000, 11000)\)                      & \(0.0095\)                                      & \(0.0176\)                                      & \(185.2\%\)    & \(0.562\)     & \(0.354\)     & \(0.084\)      \\
			                                  & \((15000, 16000)\)                      & \(0.0057\)                                      & \(0.0219\)                                      & \(384.2\%\)    & \(0.731\)     & \(0.192\)     & \(0.077\)      \\
			\hline
			\multirow{3}{*}{(e)}              & \((4706, 6274)\)                        & \(0.0031\)                                      & \(0.0035\)                                      & \(262.9\%\)    & \(0.392\)     & \(0.461\)     & \(0.147\)      \\
			                                  & \((7843, 9411)\)                        & \(0.0016\)                                      & \(0.0064\)                                      & \(400.0\%\)    & \(0.420\)     & \(0.507\)     & \(0.073\)      \\
			                                  & \((12549, 14116)\)                      & \(0.0029\)                                      & \(0.0214\)                                      & \(737.9\%\)    & \(0.400\)     & \(0.543\)     & \(0.057\)      \\
			\bottomrule
		\end{tabular}
	}
\end{table}

\subsection{Ablation study: data attribution method}
Next, we conduct an ablation study by varying the data attribution methods used in the evaluation under the experiment setting (b) in \Cref{tab:settings_summary}. Specifically, apart from Data Shapley, we consider two additional data attribution methods, Influence Function, and TRAK, for evaluating the compensation share, experimenting with both the Shadow Attack and the Outlier Attack. Note that the \Adv{} has no knowledge about what data attribution method will be used by the \AI{}. The results are shown in \Cref{tab:result_test_time_data_attribution_method_shadow_attack,tab:result_test_time_data_attribution_method_outlier_attack}. The results below show that the proposed attack methods are highly effective for all three data attribution methods, as reflected by the \textbf{Ratio} of the \textbf{Compensation Share}.

\begin{table}[ht]
	\centering
	\caption{Results of Shadow Attack with different data attribution methods of setting (b).}
	\label{tab:result_test_time_data_attribution_method_shadow_attack}
	\resizebox{\textwidth}{!}{%
		\begin{tabular}{ccccccccc}
			\toprule
			\multirow{2}{*}{\textbf{Attribution Method}} & \multirow{2}{*}{\(|Z_1^{a}|/|Z_1|\)} & \multicolumn{3}{c}{\textbf{Compensation Share}} & \multicolumn{3}{c}{\textbf{Fraction of Change}}                                                                   \\
			\cmidrule(lr){3-5} \cmidrule(lr){6-8}
			                                             &                                      & \textbf{Original}                               & \textbf{Manipulated}                            & \textbf{Ratio} & \textbf{More} & \textbf{Tied} & \textbf{Fewer} \\
			\midrule
			Data Shapley                                 & \(0.0352\)                           & \(0.0152\)                                      & \(0.0435\)                                      & \(286.2\%\)    & \(0.533\)     & \(0.333\)     & \(0.134\)      \\
			Influence Function                           & \(0.0352\)                           & \(0.0496\)                                      & \(0.1004\)                                      & \(202.4\%\)    & \(0.980\)     & \(0.000\)     & \(0.020\)      \\
			TRAK                                         & \(0.0352\)                           & \(0.0392\)                                      & \(0.0936\)                                      & \(238.8\%\)    & \(0.820\)     & \(0.100\)     & \(0.080\)      \\
			\bottomrule
		\end{tabular}
	}
\end{table}

\begin{table}[ht]
	\centering
	\caption{Results of Outlier Attack with different  data attribution methods of setting (b).}
	\label{tab:result_test_time_data_attribution_method_outlier_attack}
	\resizebox{\textwidth}{!}{%
		\begin{tabular}{ccccccccc}
			\toprule
			\multirow{2}{*}{\textbf{Attribution Method}} & \multirow{2}{*}{\(|Z_1^{a}|/|Z_1|\)} & \multicolumn{3}{c}{\textbf{Compensation Share}} & \multicolumn{3}{c}{\textbf{Fraction of Change}}                                                                   \\
			\cmidrule(lr){3-5} \cmidrule(lr){6-8}
			                                             &                                      & \textbf{Original}                               & \textbf{Manipulated}                            & \textbf{Ratio} & \textbf{More} & \textbf{Tied} & \textbf{Fewer} \\
			\midrule
			Data Shapley                                 & \(0.0250\)                           & \(0.0112\)                                      & \(0.0218\)                                      & \(194.6\%\)    & \(0.440\)     & \(0.380\)     & \(0.180\)      \\
			Influence Function                           & \(0.0250\)                           & \(0.0186\)                                      & \(0.0442\)                                      & \(237.6\%\)    & \(0.920\)     & \(0.060\)     & \(0.020\)      \\
			TRAK                                         & \(0.0250\)                           & \(0.0160\)                                      & \(0.0412\)                                      & \(257.5\%\)    & \(0.780\)     & \(0.180\)     & \(0.040\)      \\
			\bottomrule
		\end{tabular}
	}
\end{table}

\subsection{White-box attack}
In this section, we further consider \emph{white-box} attacks as an oracle reference, where the \Adv{} has full knowledge of the target model and has access to the model parameters. For the choice of attack method under the white-box threat model, we experiment with the Fast Gradient Sign Method (FGSM)~\citep{goodfellow2014explaining} for setting (a) and the Projected Gradient Descent(PGD)~\citep{mkadry2017towards} for setting (d). The results are shown in \Cref{tab:result_white_box}. Overall, compared to this oracle white-box attack, the proposed black-box attacks are only slightly worse in terms of the \textbf{Ratio} of the \textbf{Compensation Share}. This further confirms that the proposed methods are highly effective.

\begin{table}[ht]
	\centering
	\caption{Results of white-box attacks under settings (a) and (d).}
	\label{tab:result_white_box}
	\resizebox{\textwidth}{!}{%
		\begin{tabular}{ccccccccc}
			\toprule
			\multirow{2}{*}{\textbf{Setting}} & \multirow{2}{*}{\textbf{Attack Method}} & \multicolumn{3}{c}{\textbf{Compensation Share}} & \multicolumn{3}{c}{\textbf{Fraction of Change}}                                                                   \\
			\cmidrule(lr){3-5} \cmidrule(lr){6-8}
			                                  &                                         & \textbf{Original}                               & \textbf{Manipulated}                            & \textbf{Ratio} & \textbf{More} & \textbf{Tied} & \textbf{Fewer} \\
			\midrule
			(a)                               & FGSM                                    & \(0.0098\)                                      & \(0.0649\)                                      & \(662.2\%\)    & \(0.990\)     & \(0.007\)     & \(0.003\)      \\
			(d)                               & PGD                                     & \(0.0095\)                                      & \(0.0222\)                                      & \(233.7\%\)    & \(0.689\)     & \(0.204\)     & \(0.107\)      \\
			\bottomrule
		\end{tabular}
	}
\end{table}

\subsection{Adversarial attack under data augmentation}
In this section, we test the performance of our proposed attack methods when the \AI{} utilizes data augmentation techniques when training the model. Specifically, after the \AI{} gathers data, the training dataset is then formed by a combination of the gathered data and its augmented version. For simplicity, we consider settings (a), (c), and (d) in \Cref{tab:settings_summary} and consider standard image augmentation methods such as random cropping and random affine transformation of images. With data augmentation, the compensation share will be attributed back to the \textbf{original} data point if its augmented version is identified as influential. The results are shown in \Cref{tab:result_data_aug_shadow,tab:result_data_aug_outlier}. Compared to the original setting without data augmentation, we cannot draw a definite conclusion on whether data augmentation helps defend against the proposed attacks since the trend is unclear. However, overall, we can conclude that the proposed attacks are still highly effective even when data augmentation is utilized by the \AI{}.

\begin{table}[ht]
	\centering
	\caption{Results of Shadow Attack under data augmentation}
	\label{tab:result_data_aug_shadow}
	\resizebox{\textwidth}{!}{%
		\begin{tabular}{ccccccccc}
			\toprule
			\multirow{2}{*}{\textbf{Setting}} & \multirow{2}{*}{\(|Z_1^{a}|/|Z_1|\)} & \multicolumn{3}{c}{\textbf{Compensation Share}} & \multicolumn{3}{c}{\textbf{Fraction of Change}}                                                                   \\
			\cmidrule(lr){3-5} \cmidrule(lr){6-8}
			                                  &                                      & \textbf{Original}                               & \textbf{Manipulated}                            & \textbf{Ratio} & \textbf{More} & \textbf{Tied} & \textbf{Fewer} \\
			\midrule
			(a)                               & \(0.0098\)                           & \(0.0099\)                                      & \(0.0209\)                                      & \(211.1\%\)    & \(0.632\)     & \(0.244\)     & \(0.124\)      \\
			(c)                               & \(0.0098\)                           & \(0.0084\)                                      & \(0.0516\)                                      & \(614.3\%\)    & \(0.894\)     & \(0.095\)     & \(0.011\)      \\
			(d)                               & \(0.0098\)                           & \(0.0091\)                                      & \(0.0747\)                                      & \(820.9\%\)    & \(0.959\)     & \(0.035\)     & \(0.006\)      \\
			\bottomrule
		\end{tabular}
	}
\end{table}

\begin{table}[ht]
	\centering
	\caption{Results of Outlier Attack under data augmentation.}
	\label{tab:result_data_aug_outlier}
	\resizebox{\textwidth}{!}{%
		\begin{tabular}{ccccccccc}
			\toprule
			\multirow{2}{*}{\textbf{Setting}} & \multirow{2}{*}{\(|Z_1^{a}|/|Z_1|\)} & \multicolumn{3}{c}{\textbf{Compensation Share}} & \multicolumn{3}{c}{\textbf{Fraction of Change}}                                                                   \\
			\cmidrule(lr){3-5} \cmidrule(lr){6-8}
			                                  &                                      & \textbf{Original}                               & \textbf{Manipulated}                            & \textbf{Ratio} & \textbf{More} & \textbf{Tied} & \textbf{Fewer} \\
			\midrule
			(a)                               & \(0.0098\)                           & \(0.0099\)                                      & \(0.0287\)                                      & \(289.9\%\)    & \(0.776\)     & \(0.158\)     & \(0.066\)      \\
			(c)                               & \(0.0098\)                           & \(0.0084\)                                      & \(0.0691\)                                      & \(822.6\%\)    & \(0.853\)     & \(0.113\)     & \(0.034\)      \\
			(d)                               & \(0.0098\)                           & \(0.0091\)                                      & \(0.0481\)                                      & \(528.6\%\)    & \(0.916\)     & \(0.065\)     & \(0.019\)      \\
			\bottomrule
		\end{tabular}
	}
\end{table}

\subsection{Compensating only correct prediction}In this section, we consider the setup when the \AI{} only rewards training data points that are highly influential to validation samples with \emph{correct} predictions. In detail, after training the model and before calculating of compensation share, the \AI{} will test the model on a private validation set, select those validation points that are correctly predicted, and only attribute compensation to influential training points for these validation points. We test this on settings (a), (c), and (d). Note that the \textbf{Fraction of Change} metric is no longer feasible in this setup since the model trained on the original and manipulated training set may not share the same validation set. The results are shown in \Cref{tab:result_compensating_correct}. Overall, our proposed attacks are still highly effective based on the \textbf{Ratio} of \textbf{Compensation Share}.

\begin{table}[ht]
	\centering
	\caption{Results when compensating only correctly predicted data points.}
	\label{tab:result_compensating_correct}
	\begin{tabular}{cccccc}
		\toprule
		\multirow{2}{*}{\textbf{Attack Type}} & \multirow{2}{*}{\textbf{Setting}} & \multirow{2}{*}{\textbf{\(|Z_1^{a}|/|Z_1|\)}} & \multicolumn{3}{c}{\textbf{Compensation Share}}                                         \\
		\cmidrule(lr){4-6}
		                                      &                                   &                                               & \textbf{Original}                               & \textbf{Manipulated} & \textbf{Ratio} \\
		\midrule
		\multirow{3}{*}{Shadow Attack}        & (a)                               & \(0.0098\)                                    & \(0.0050\)                                      & \(0.0382\)           & \(764.0\%\)    \\
		                                      & (c)                               & \(0.0098\)                                    & \(0.0090\)                                      & \(0.0752\)           & \(835.6\%\)    \\
		                                      & (d)                               & \(0.0098\)                                    & \(0.0110\)                                      & \(0.0353\)           & \(320.9\%\)    \\
		\midrule
		\multirow{3}{*}{Outlier Attack}       & (a)                               & \(0.0098\)                                    & \(0.0050\)                                      & \(0.0273\)           & \(546.0\%\)    \\
		                                      & (c)                               & \(0.0098\)                                    & \(0.0090\)                                      & \(0.0552\)           & \(613.3\%\)    \\
		                                      & (d)                               & \(0.0098\)                                    & \(0.0110\)                                      & \(0.0227\)           & \(206.4\%\)    \\
		\bottomrule
	\end{tabular}
\end{table}

\subsection{Adversarial attack on large-scale setup}
In this section, we scale up our experiment and see how the proposed attacks generalize in this large-scale experiment. Specifically, we consider the ResNet-18 model~\citep{he2016deep} on the Tiny ImageNet dataset~\citep{le2015tiny} with \(|Z_0|,|Z_1|)  = (50000,60000)\), and we perturb \(|Z_1^a| = 100 \) training points. Both the Shadow Attack and Outlier Attack are tested, and the results are shown in \Cref{tab:result_data_large}. It is evident that from the \textbf{Ratio} of \textbf{Compensation Share}, the two proposed attacks are still highly effective in a large-scale experiment.

\begin{table}[ht]
	\centering
	\caption{Results of Shadow Attack and Outlier Attack under large-scale setup .}
	\label{tab:result_data_large}
	\resizebox{\textwidth}{!}{%
		\begin{tabular}{cccccccc}
			\toprule
			\multirow{2}{*}{\textbf{Attack Type}} & \multirow{2}{*}{\textbf{\(|Z_1^a| / |Z_1| \)}} & \multicolumn{3}{c}{\textbf{Compensation Share}} & \multicolumn{3}{c}{\textbf{Fraction of Change}}                                                                   \\
			 \cmidrule(lr){3-5} \cmidrule(lr){6-8}
			                   & & \textbf{Original} & \textbf{Manipulated} & \textbf{Ratio} & \textbf{More} & \textbf{Tied} & \textbf{Fewer} \\
			\midrule
			Outlier Attack             & 0.0017                     & \(0.0009\)         & \(0.0056\)           & \(622.2\%\)    & \(0.456\)     & \(0.498\)     & \(0.046\)      \\
			Shadow Attack              & 0.0017                      & \(0.0009\)         & \(0.0181\)           & \(2011.1\%\)   & \(0.855\)     & \(0.131\)     & \(0.014\)      \\
			\bottomrule
		\end{tabular}
	}
\end{table}

\section{Visualization of the perturbation}\label{adxsec:visual}
Finally, \Cref{fig:result} provides visualizations of two examples of the adversarial perturbed images from MNIST and CIFAR-10. In both cases, the perturbations are barely visible to human eyes, however, they are highly effective in terms of the success of the attack.
\begin{figure}[ht]
	\centering
	\begin{subfigure}[b]{0.2\textwidth}
		\centering
		\includegraphics[width=0.8\linewidth]{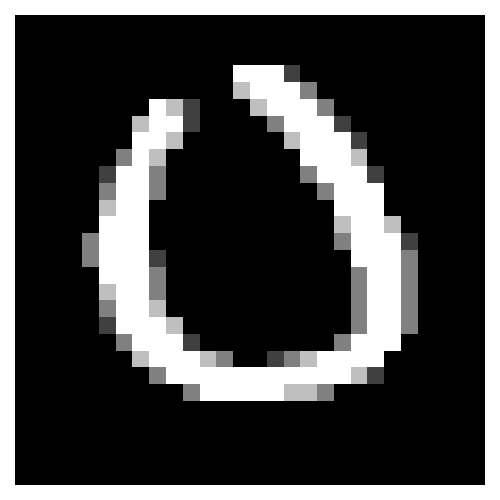}
		\caption{\hfil\textbf{Original.}\\ Influential for \(0\) validation data points.}
	\end{subfigure}\hspace{3em}%
	\begin{subfigure}[b]{0.2\textwidth}
		\centering
		\includegraphics[width=0.8\linewidth]{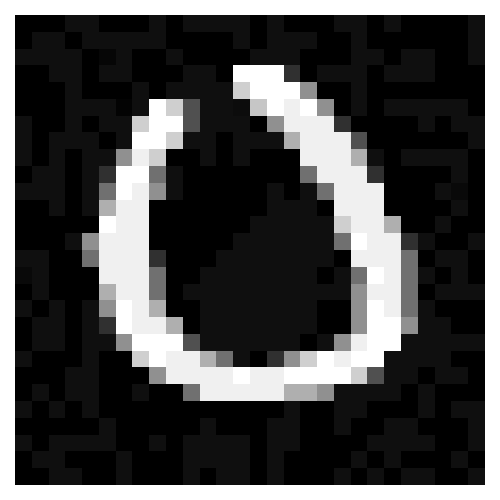}
		\caption{\hfil\textbf{Shadow Attack.}\\ Influential for \(75\) validation data points.}
	\end{subfigure}\hspace{3em}%
	\begin{subfigure}[b]{0.2\textwidth}
		\centering
		\includegraphics[width=0.8\linewidth]{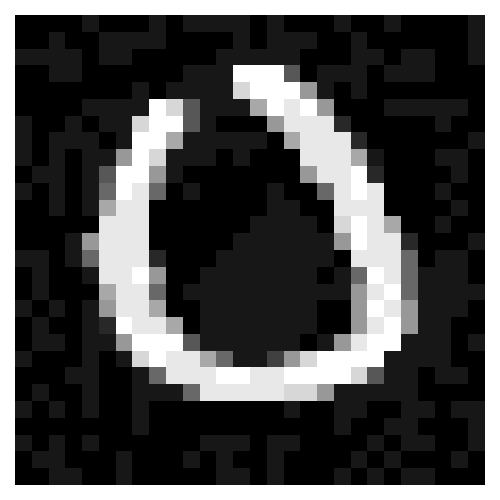}
		\caption{\hfil\textbf{Outlier Attack.}\\ Influential for \(105\) validation data points.}
	\end{subfigure}\\
	\begin{subfigure}[b]{0.2\textwidth}
		\centering
		\includegraphics[width=0.8\linewidth]{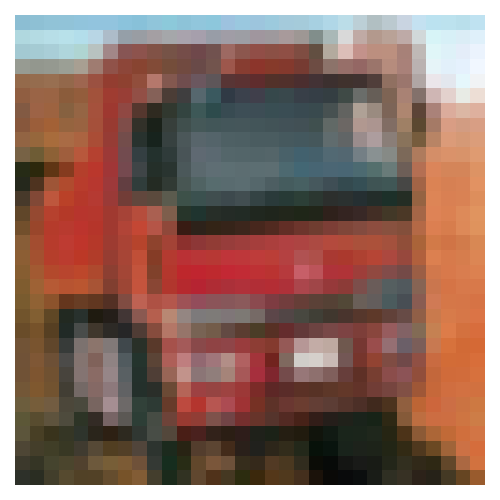}
		\caption{\hfil\textbf{Original.}\\ Influential for \(1\) validation data points.}
	\end{subfigure}\hspace{3em}%
	\begin{subfigure}[b]{0.2\textwidth}
		\centering
		\includegraphics[width=0.8\linewidth]{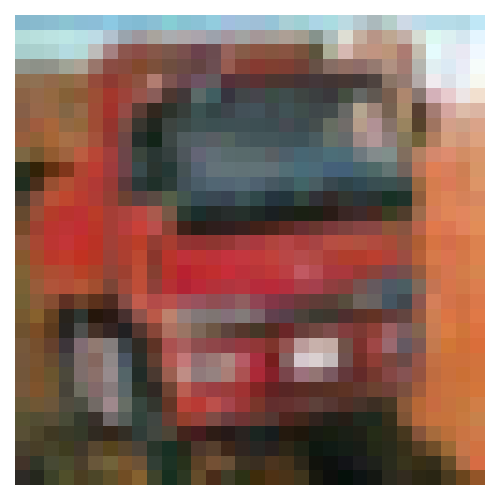}
		\caption{\hfil\textbf{Shadow Attack.}\\ Influential for \(38\) validation data points.}
	\end{subfigure}\hspace{3em}%
	\begin{subfigure}[b]{0.2\textwidth}
		\centering
		\includegraphics[width=0.8\linewidth]{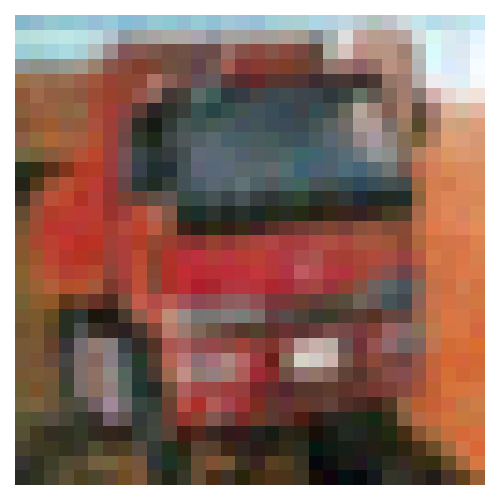}
		\caption{\hfil\textbf{Outlier Attack.}\\ Influential for \(29\) validation data points.}
	\end{subfigure}
	\caption{Visualization of MNIST (\emph{Top}) and CIFAR-10 (\emph{Bottom}), before and after attacks.}
	\label{fig:result}
\end{figure}

\end{document}